\definecolor{mygreen}{rgb}{0.0, 0.5, 0.0}
\definecolor{winered}{rgb}{0.8,0,0}
\definecolor{myblue}{rgb}{0,0,0.8}
\newtheorem{definition}{Definition}
\newtheorem{theorem}{Theorem}
\newtheorem{lemma}{Lemma}
\newtheorem{proposition}{Proposition}
\newtheorem{remark}{Remark}
\DeclareMathOperator*{\argmax}{\arg\!\max}
\DeclarePairedDelimiterX{\norm}[1]{\lVert}{\rVert}{#1}
\newcommand{\ef}{f_{\theta_p}}
\newcommand\blfootnote[1]{%
  \begingroup
  \renewcommand\thefootnote{}\footnote{#1}%
  \addtocounter{footnote}{-1}%
  \endgroup
}
\newtheorem{Problem}{Problem}
\title{Robust Information Selection for Hypothesis Testing with Misclassification Penalties}
\author{Jayanth Bhargav, Shreyas Sundaram and Mahsa Ghasemi}
\date{}
\begin{document}
\maketitle
\begin{abstract}
We study the problem of robust information selection for a Bayesian hypothesis testing / classification task, where the goal is to identify the true state of the world from a finite set of hypotheses based on observations from the selected information sources. We introduce a novel misclassification penalty framework, which enables non-uniform treatment of different misclassification events. Extending the classical subset selection framework, we study the problem of selecting a subset of sources that minimize the maximum penalty of misclassification under a limited budget, despite deletions or failures of a subset of the selected sources. We characterize the curvature properties of the objective function and propose an efficient greedy algorithm with performance guarantees. Next, we highlight certain limitations of optimizing for the maximum penalty metric and propose a submodular surrogate metric to guide the selection of the information set. We propose a greedy algorithm with near-optimality guarantees for optimizing the surrogate metric. Finally, we empirically demonstrate the performance of our proposed algorithms in several instances of the information set selection problem. \blfootnote{This material is based upon work supported by the Office of Naval Research (ONR) via Contract No. N00014-23-C-1016 and under subcontract to Saab, Inc. as part of the TSUNOMI project. Any opinions, findings and conclusions or recommendations expressed in this material are those of the author(s) and do not necessarily reflect the views of ONR, the U.S. Government, or Saab, Inc. \\
The authors are with the Elmore Family School of Electrical and Computer Engineering, Purdue University, West Lafayette IN 47907 USA. Email addresses: \{\tt jbhargav, \tt sundara2, \tt mahsa\}\tt @purdue.edu}
\end{abstract}

\textbf{Keywords:}
Information Selection, Adversarial Robustness, Non-Submodular Optimization, Greedy Algorithms
\section{Introduction}
In many autonomous systems, agents rely on predictions made by classifiers to make decisions or take actions. However, these predictions can be compromised when critical information sources are disrupted due to adversarial attacks or sensor failures, leading to costly and potentially catastrophic outcomes. For example, in a surveillance system, an intruder could jam or manipulate sensors to evade detection, or natural failures such as sensor outages could cause the system to lose critical observations. Such disruptions not only degrade the quality of predictions, but also amplify the risk of misclassifications, particularly in high-stakes environments. To ensure reliable decision-making in these settings, it is crucial to design robust selection strategies that maintain system performance even in the presence of adversarial actions or failures.
In this paper, we study the problem of robust information selection for a hypothesis testing/classification task, which explicitly accounts for the possibility of disruptions to the selected information sources. We consider the setting where a system designer must select a subset of information sources, ensuring that the selected set can withstand adversarial attacks or unforeseen failures while still ensuring certain desired system performance, which is quantified by a misclassification penalty framework.
As a motivating example, consider a surveillance task where the goal is to accurately identify targets, such as distinguishing between a drone (an intruder) and a bird. Misclassifying a drone as a bird could result in a security breach, while misclassifying a bird as a drone might lead to unnecessary deployment of resources. These differing costs are captured using a penalty matrix, which quantifies the impact of different types of misclassification errors. Now imagine a scenario where an adversary disables certain sensors or where a sensor fails due to environmental factors. A robust information set selection ensures that even under these disruptions, the selected subset of information sources collectively provides sufficient reliability to ensure minimal misclassification penalty.

\subsection{Related Work}
Misclassification risk and uncertainty quantification for various types of classifiers have been very well studied in the literature \cite{adams1999comparing,pendharkar2017bayesian,hou2013modeling}. In \cite{sensoy2021misclassification}, the authors propose a risk-calibrated classifier to reduce the costs associated with misclassification errors, and empirically show the effectiveness of their algorithm in a  deep learning framework. In \cite{elkan2001foundations}, the authors study cost-sensitive learning for class balancing in order to improve the quality of predictions in decision tree learning methods. In our work, we consider a hypothesis testing (or classification) task in a Bayesian learning framework. 

A subset of the literature has addressed the problem of sequential information gathering within a limited budget \cite{hollinger2013sampling,chen2015sequential}. The authors of  \cite{golovin2010near} study data source selection for a monitoring application, where the sources are selected sequentially in order to estimate certain parameters of an environment. In \cite{ghasemi2019online}, the authors study sequential information gathering under a limited budget for a robotic navigation task. In \cite{gupta2006stochastic}, the authors study sequential sensor scheduling to jointly estimate a process and present a stochastic selection algorithm which is computationally tractable.  In contrast, we consider the scenario where the information set is selected \textit{a priori}, i.e., at \textit{design-time}, and propose an efficient algorithm with guarantees. 

A substantial body of work focuses on the study of submodularity (and/or weak submodularity) properties for efficient greedy techniques with provable guarantees for feature selection in sparse learning \cite{krause2010submodular,chepuri2014sparsity}, sensor selection for estimation \cite{mo2011sensor,hashemi2020randomized,shamaiah2010greedy,krause2007near} \& Kalman filtering \cite{ye2018complexity}, and observation selection for mixed-observable Markov decision processes \cite{bhargav2023complexity}.  Along the lines of these works, we leverage the weak submodularity property of the performance metric and present greedy algorithms with performance guarantees.

Robust sensor selection has been extensively studied in the context of resource-constrained environments where sensors may fail, be removed, or experience adversarial attacks \cite{ye2020complexity,laszka2015resilient,oh2023dynamic}. Early works, such as those by \cite{krause2008robust,kaya2025randomized,laszka2015resilient}, focused on optimizing submodular objectives to achieve near-optimal sensor placement under budget constraints, ensuring reliable performance despite uncertainty. In the robust setting, adversarial or stochastic failures were explored by \cite{tzoumas2018resilient}, where greedy algorithms were developed to guarantee near-optimal performance. The authors of \cite{kaya2025randomized} study robust weak-submodular optimization of a set of weak-submodular functions, where the goal is to maximize the utility of the worst-case objective. Along similar lines as these works, we study the robust information selection problem where the objective is not submodular, and present greedy algorithms with near-optimality guarantees. Furthermore, we present a submodular surrogate metric which enjoys improved performance guarantees for greedy approximation.
\newpage
\subsection{Contributions} 
First, we propose a novel misclassification penalty framework for hypothesis testing tasks that captures the varying importance of different classification errors, enabling non-uniform treatment of misclassification events. Extending the classical information selection paradigm, we formulate the problem of robust information subset selection under a limited budget to minimize the maximum penalty of misclassifying a hypothesis, accounting for worst-case disruptions. We characterize the curvature properties of the objective function and propose an efficient greedy algorithm for robust subset selection with near-optimality guarantees. Next, we highlight some limitations in the achievable performance guarantees and propose a submodular surrogate metric based on the total penalty of misclassification. We propose a greedy algorithm with near-optimality guarantees for robust subset selection that optimizes for the surrogate metric. Finally, we empirically validate our framework and algorithms in diverse scenarios, demonstrating their effectiveness in ensuring robust and cost-effective decision-making under uncertainty. 

Part of the results presented in this paper were previously presented in the conference paper \cite{bhargav24a}. Specifically, in \cite{bhargav24a}, we studied the optimal information subset selection problem for hypothesis testing and established the weak-submodularity property of the objective function by characterizing it's submodularity ratio. In this work, we consider the robust version of the information set selection problem and characterize various curvature properties of the objective function. Building on existing works in the literature on non-submodular optimization, we propose efficient greedy techniques with provable performance guarantees for approximating the solution. Additionally, we propose a submodular surrogate metric for information set selection. We characterize the curvature properties of the surrogate objective and propose an efficient greedy algorithm which enjoys improved performance bounds. 
%%%%%%%%%%%%%%%%%%%%%%%%%%%%%%%%%%%%%%%%%%%%%%%%%%%%%%%%%%%%%%%%%%%%%%%%%%
\section{Background and Problem Formulation}
\label{sec:prob_form}
In this section, we present some preliminaries and formulate the robust information set selection problem.

Let $\Theta = \{ \theta_1, \theta_2, \hdots, \theta_m \}$, where $m = |\Theta| $, be a finite set of possible hypotheses (also referred to as classes or states), of which one of them is the true state of the world. We consider a set $\mathcal{D} = \{1,2, \hdots, n \}$ of information sources (or data streams) from which we need to select a subset $\mathcal{I} \subseteq \mathcal{D}$. At each time step $t \in \mathbb{Z}_{\geq 1}$, the observation provided by the information source $i \in \mathcal{D}$ is denoted as $o_{i,t} \in O_i$, where $O_i$ is the observation space of the source $i$. Each information source $i \in \mathcal{D}$ is associated with an observation likelihood function $\ell_i(\cdot | \theta)$, which is conditioned on the state of the world $\theta \in \Theta$. At time $t$, conditioned on the true state of the world $\theta \in \Theta$, a joint observation profile of $n$ information sources, denoted as $o_t = (o_{1,t}, \hdots, o_{n,t}) \in \mathcal{O}$ where $\mathcal{O} = O_1 \times \hdots \times O_n$, is generated by the joint likelihood function $\ell(\cdot|\theta)$. We make the following assumption on the observation model (e.g., see \cite{jadbabaie2012non,liu2014social,lalitha2014social} for detailed discussions regarding this assumption).\\

\textbf{Assumption 1:} \textit{The observation space $O_i$ associated with each information source $i \in \mathcal{D}$ is finite, and the likelihood function $\ell_i(\cdot | \theta)$ satisfies $\ell_i(\cdot | \theta) > 0$ for all $o_i \in O_i$ and for all $\theta \in \Theta$. We assume that the designer knows $\ell_i(\cdot | \theta)$ for all $\theta \in \Theta$ and all $i \in \mathcal{D}$. For all $\theta \in \Theta$, conditioned on the true state, the observations $o_i$ are independent of each other. For a source $i \in \mathcal{D}$, the observation sequence $\{ o_{i,1}, o_{i,2}, \hdots \}$ is a sequence of independent identically distributed (i.i.d.) random variables, given a true state $\theta \in \Theta$. } \\

\newpage
Consider the scenario where a system designer (also referred to as a central node) needs to select a subset of information sources in order to identify the true state of the world. For any subset $\mathcal{I} \subseteq \mathcal{D}$ with $|\mathcal{I}| = k$, let $\mathcal{I} = \{s_1, s_2, \hdots, s_k\}$ denote the information sources. The joint observation conditioned on the $\theta \in \Theta$ of this information set at time $t$ is defined as $o_{\mathcal{I},t} = \{ o_{s_1,t}, \hdots, o_{s_k,t} \} \in O_{s_1} \times \hdots \times O_{s_k} $, and is generated by the joint likelihood function $\ell_{\mathcal{I}}(\cdot | \theta) = \Pi_{i = 1}^{k} \ell_{s_i}(\cdot | \theta)$ (by Assumption 1). The designer knows $\ell_{\mathcal{I}} (\cdot | \theta)$ for all $\mathcal{I} \subseteq \mathcal{D}$ and for all $\theta \in \Theta$. For a true state $\theta_p \in \Theta$, we define $\mathbb{P}^{\theta_p} = \prod_{t=1}^{\infty}\ell(\cdot|\theta_p)$ to be the probability measure. For the sake of brevity, we will say that an event occurs almost surely to mean that it occurs almost surely w.r.t. the probability measure $\mathbb{P}^{\theta_p}.$ 
As the data comes in, the central node updates its belief over the set of possible hypotheses using the standard Bayes' rule. Let $\mu_{t}^{\mathcal{I}}(\theta)$ denote the belief of the central designer (or node) that $\theta$ is the true hypothesis  at time step $t$ based on the information sources in $\mathcal{I}$, and let $\mu_0(\theta)$ denote the initial belief (or prior) of the central node that $\theta$ is the true state of the world, with $\sum_{\theta \in \Theta} \mu_0(\theta) = 1$. The Bayesian update rule is given by 
\begin{equation}
    \label{eq:bayes_full}
    \mu_{t+1}^{\mathcal{I}}(\theta) = \frac{\mu_0(\theta)  \prod_{j=0}^{t} \ell_{\mathcal{I}} (o_{\mathcal{I}, j+1} | \theta) }{\sum_{\theta_i \in \Theta} \mu_0(\theta_i) 
  \prod_{j=0}^{t} \ell_{\mathcal{I}} (o_{\mathcal{I}, j+1} | \theta_i) } \quad \forall \theta \in \Theta.
\end{equation}

For a hypothesis $\theta \in \Theta$ and an information set $\mathcal{I} \subseteq \mathcal{D}$, we have the following.

\begin{definition}[Observationally Equivalent Set \cite{ye2021near}] For a given hypothesis (or class) $\theta \in \Theta$ and a given $\mathcal{I} \subseteq \mathcal{D}$, the \textit{observationally equivalent set} of classes to $\theta$ is defined as
\begin{equation}
    F_{\theta}(\mathcal{I}) = \{ \theta_i \in \Theta \mid D_{KL} (\ell_\mathcal{I}(\cdot | \theta_i) || \ell_\mathcal{I}(\cdot | \theta) ) = 0 \},
\end{equation}
where $D_{KL}(\ell_\mathcal{I}(\cdot | \theta_i) || \ell_\mathcal{I}(\cdot | \theta) )$ is the Kullback-Leibler divergence measure between $\ell_\mathcal{I}(\cdot | \theta_i)$ and $ \ell_\mathcal{I}(\cdot | \theta) $.
\end{definition}
From the definition above, we have $\theta \in F_{\theta}(\mathcal{I})$ for all $\theta \in \Theta$ and for all $\mathcal{I} \subseteq \mathcal{D}$. We can write the set $F_{\theta}(\mathcal{I})$ as
\begin{equation}
\label{eq:obs_eq_set}
    F_{\theta}(\mathcal{I}) = \{ \theta_i \in \Theta : \ell_{\mathcal{I}}(o_{\mathcal{I}} | \theta_i) = \ell_{\mathcal{I}}(o_{\mathcal{I}} | \theta), \forall o_{\mathcal{I}} \in \mathcal{O}_{\mathcal{I}}   \},
\end{equation}
where $\mathcal{O}_{\mathcal{I}}  = O_{s_1} \times \hdots \times O_{s_k}$ is the joint observation space of the information set $\mathcal{I}$. In other words, $F_{\theta}(\mathcal{I})$ is the set of hypotheses (or classes) that cannot be distinguished from $\theta$ based on the observations obtained by the information sources in $\mathcal{I}$. Furthermore, by Assumption 1 and Equation (\ref{eq:obs_eq_set}), we have the following (see Section 2 in \cite{ye2021near}):
\begin{equation}
    \label{eq: obs_eq_intersect}
    F_{\theta}(\mathcal{I}) = \bigcap_{s_i \in \mathcal{I}} F_{\theta} (s_i), \forall \mathcal{I} \in \mathcal{D}, \forall \theta \in \Theta.
\end{equation}
We define $F_{\theta}(\emptyset) = \Theta$, i.e.,  when there is no information set, all classes are observationally equivalent.
 At time $t$, the central designer predicts the state of the world based on the belief  $\mu_{t}^{\mathcal{I}}$ generated by the information set $\mathcal{I}$. In order to characterize the learning performance, we consider a penalty-based classification framework. Let  $\Xi = [\xi_{ij}] \in \mathbb{R}^{m \times m}$ denote the \textit{penalty matrix}, where $ 0 \leq \xi_{ij} \leq 1$ is the penalty associated with predicting the class to be $\theta_j$, given that the true class  is $\theta_i$. The penalty matrix is assumed to be \textit{row stochastic}, i.e., $\sum_{j = 1}^{m} \xi_{ij} = 1$. We have $\xi_{ii} = 0, \hspace{4pt} \forall i \in \{1,2, \hdots, m\}$, i.e., there is no penalty when the predicted hypothesis is the true hypothesis.  Similar to analyses presented in \cite{nedic2017fast,mitra2020new}, we characterize the asymptotic convergence of the Bayesian belief over the set of hypotheses. We consider the case of a uniform prior, but the results can be extended to non-uniform priors (using similar arguments as in Lemma 1 of \cite{mitra2020new}).
\newpage
\begin{theorem}
\label{thm2}
    Let the true state of the world be $\theta_p$ and let $\mu_0 (\theta) = \frac{1}{m} \hspace{5pt} \forall \theta \in \Theta$ (i.e., uniform prior). Under Assumption 1, for an information set $\mathcal{I} \subseteq \mathcal{D}$, the Bayesian update rule in Equation (\ref{eq:bayes_full}) has the following property: 
    \begin{enumerate}[label=(\alph*)]
        \item  $ \displaystyle \lim_{t \to \infty} \mu_t^{\mathcal{I}}(\theta_q) =  \mu_t^{\mathcal{I}}(\theta_p) \hspace{4pt} a.s.,  \hspace{5pt} \forall \theta_q \in F_{\theta_p}(\mathcal{I})$, and
        \item  $\displaystyle \lim_{t \to \infty} \mu_t^{\mathcal{I}}(\theta_q) = 0 \hspace{4pt} a.s.,\hspace{5pt} \forall \theta_q \notin F_{\theta_p}(\mathcal{I})$.
    \end{enumerate}
\end{theorem}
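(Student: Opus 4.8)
The plan is to track the pairwise log-belief ratio between an arbitrary hypothesis $\theta_q$ and the true state $\theta_p$, and then to invoke the strong law of large numbers (SLLN). First I would fix $\theta_q \in \Theta$ and form the ratio $\mu_t^{\mathcal{I}}(\theta_q)/\mu_t^{\mathcal{I}}(\theta_p)$ directly from the update rule in Equation (\ref{eq:bayes_full}). The normalizing denominators in (\ref{eq:bayes_full}) are identical for $\theta_q$ and $\theta_p$ and hence cancel, and under the uniform prior the factors $\mu_0(\theta_q)=\mu_0(\theta_p)=1/m$ cancel as well. Taking logarithms yields
\begin{equation*}
\log \frac{\mu_t^{\mathcal{I}}(\theta_q)}{\mu_t^{\mathcal{I}}(\theta_p)} = \sum_{j=1}^{t} \log \frac{\ell_{\mathcal{I}}(o_{\mathcal{I},j}\mid\theta_q)}{\ell_{\mathcal{I}}(o_{\mathcal{I},j}\mid\theta_p)},
\end{equation*}
a sum of independent, identically distributed summands, since under $\mathbb{P}^{\theta_p}$ the joint observations $\{o_{\mathcal{I},j}\}_j$ are i.i.d. by Assumption 1.

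Next I would divide by $t$ and apply the SLLN. Each summand is bounded, because the observation space $\mathcal{O}_{\mathcal{I}}$ is finite and, by Assumption 1, the likelihoods are strictly positive; hence the log-likelihood ratios are finite and integrable under $\mathbb{P}^{\theta_p}$. This gives
\begin{equation*}
\frac{1}{t}\log \frac{\mu_t^{\mathcal{I}}(\theta_q)}{\mu_t^{\mathcal{I}}(\theta_p)} \xrightarrow{\,t\to\infty\,} \mathbb{E}_{\theta_p}\!\left[\log \frac{\ell_{\mathcal{I}}(o_{\mathcal{I}}\mid\theta_q)}{\ell_{\mathcal{I}}(o_{\mathcal{I}}\mid\theta_p)}\right] = -\,D_{KL}\!\left(\ell_{\mathcal{I}}(\cdot\mid\theta_p)\,\big\|\,\ell_{\mathcal{I}}(\cdot\mid\theta_q)\right) \quad a.s.,
\end{equation*}
where the last equality is just the definition of the KL divergence.

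I would then split into the two cases using the definition of $F_{\theta_p}(\mathcal{I})$ together with the fact that, for strictly positive distributions on a finite alphabet, the KL divergence vanishes if and only if the two distributions coincide (Gibbs' inequality); this makes both directions of the divergence simultaneously zero or simultaneously positive. For case (a), $\theta_q \in F_{\theta_p}(\mathcal{I})$ forces $\ell_{\mathcal{I}}(\cdot\mid\theta_q)=\ell_{\mathcal{I}}(\cdot\mid\theta_p)$ by Equation (\ref{eq:obs_eq_set}), so every summand above is identically zero and $\mu_t^{\mathcal{I}}(\theta_q)=\mu_t^{\mathcal{I}}(\theta_p)$ holds for all $t$, not merely in the limit. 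For case (b), $\theta_q \notin F_{\theta_p}(\mathcal{I})$ gives $D_{KL}(\ell_{\mathcal{I}}(\cdot\mid\theta_p)\|\ell_{\mathcal{I}}(\cdot\mid\theta_q))>0$, so the a.s.\ limit above is a strictly negative constant; consequently $\log\!\big(\mu_t^{\mathcal{I}}(\theta_q)/\mu_t^{\mathcal{I}}(\theta_p)\big)\to-\infty$, i.e.\ the ratio tends to $0$ a.s. Since $\mu_t^{\mathcal{I}}(\theta_p)\le 1$, we conclude $\mu_t^{\mathcal{I}}(\theta_q)\le \mu_t^{\mathcal{I}}(\theta_q)/\mu_t^{\mathcal{I}}(\theta_p)\to 0$, which establishes (b).

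I do not expect a deep obstacle: this is the standard Bayesian-learning consistency computation. The points needing the most care are (i) verifying integrability of the log-likelihood ratios so the SLLN applies, which follows cleanly from the finiteness and strict positivity in Assumption 1, and (ii) passing from convergence of the log-ratio to the claimed convergence of $\mu_t^{\mathcal{I}}(\theta_q)$ itself in case (b), where I would bound $\mu_t^{\mathcal{I}}(\theta_q)$ by the ratio via $\mu_t^{\mathcal{I}}(\theta_p)\le 1$ rather than attempt to control the denominator directly. A minor bookkeeping subtlety is that the divergence in the SLLN limit, $D_{KL}(\ell_{\mathcal{I}}(\cdot\mid\theta_p)\|\ell_{\mathcal{I}}(\cdot\mid\theta_q))$, is the reverse of the one defining $F_{\theta_p}(\mathcal{I})$, but the Gibbs-inequality equivalence makes the two interchangeable for deciding membership.
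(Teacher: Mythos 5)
Your proposal is correct: the cancellation of the normalizer and the uniform prior, the SLLN applied to the bounded i.i.d.\ log-likelihood ratios, the Gibbs-inequality argument making the two directions of the KL divergence interchangeable for membership in $F_{\theta_p}(\mathcal{I})$, and the bound $\mu_t^{\mathcal{I}}(\theta_q)\le \mu_t^{\mathcal{I}}(\theta_q)/\mu_t^{\mathcal{I}}(\theta_p)$ in case (b) are all sound, and case (a) you in fact prove in the stronger form of exact equality for all $t$. The paper itself gives no inline proof --- it defers to Theorem 2 of its companion conference paper, which (as the surrounding text indicates) follows the same standard asymptotic-consistency analysis of Bayesian beliefs via log-likelihood ratios, so your argument is essentially the canonical route the paper relies on.
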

\begin{proof}
    See proof of Theorem 2 in \cite{bhargav24a}.
\hfill   \end{proof}
Theorem \ref{thm2} implies the following: if one has a uniform prior over the set of hypotheses, then the Bayesian update rule ensures that the asymptotic belief over the classes which are observationally equivalent to the true class have the same belief as that of the true class and the belief over the classes which are not observationally equivalent to the true class approach zero. This means that the central node can predict any $\theta \in F_{\theta_p}(\mathcal{I})$ to be the true class with the same probability. 

Consider the scenario in which the central designer has a fixed budget $K$ to select information sources and seeks to minimize the maximum penalty of misclassifying the true state. However, an attacker may remove up to $A < K$ information sources, or alternatively, up to $A$ sources may fail, making their observations unavailable. Since the true state is not known a priori, the central designer has to minimize the maximum penalty for each possible true state, which is a multi-objective optimization problem under a budget constraint. We scalarize the multi-objective optimization into a single-objective optimization problem. The optimal solution to this single-objective problem is a Pareto optimal solution to the multi-objective problem \cite{hwang2012multiple}. The goal of the designer is to select a subset of up to $K$ sources $\mathcal{I} \subseteq  \mathcal{D}, |\mathcal{I}| \le K$, such that the worst-case removal of up to $A$ sources ($\mathcal{I}' \subseteq \mathcal{I}, |\mathcal{I}'| \le A$) still ensures a minimal misclassification penalty. We now formalize the Robust Minimum Penalty Information Set Selection (R-MPIS) Problem.

\begin{Problem}[R-MPIS]
\label{prob:mpis}
    Consider a set $\Theta=\left\{\theta_1, \ldots, \theta_m\right\}$ of possible states of the world; a set $\mathcal{D}$ of information sources; a row-stochastic penalty matrix $ \Xi = [\xi_{ij}] \in \mathbb{R}^{m \times m}$ ; a selection budget $K \in \mathbb{R}_{\geq0}$; and an attack budget $A \in \mathbb{R}_{\geq0}$. The R-MPIS Problem is to find a set of selected information sources $\mathcal{I} \subseteq \mathcal{D}$ that solves
\begin{equation}
\label{eq:mpis}
\begin{aligned}
 \min _{\mathcal{I} \subseteq \mathcal{D}} \max_{\mathcal{I}' \subseteq \mathcal{I}} & \hspace{5pt} \displaystyle \sum_{\theta_p \in \Theta} \left(\max_{\theta_j \in F_{\theta_p}(\mathcal{I} \setminus \mathcal{I}')} \xi_{pj} \right) ;\\
  \hspace{2pt} \text { s.t. } & |\mathcal{I}|  \leq K, |\mathcal{I}'|\leq A.
\end{aligned}
\end{equation}
\end{Problem}
Consider the following optimization problem: 
\begin{equation}
\label{eq:mpis_max}
\begin{aligned}
 \max _{\mathcal{I} \subseteq \mathcal{D}} \min _{\mathcal{I} \subseteq \mathcal{I}'} & \hspace{5pt} \displaystyle \sum_{\theta_p \in \Theta} \left(1-\max_{\theta_j \in F_{\theta_p}(\mathcal{I} \setminus \mathcal{I}')} \xi_{pj} \right);
  \hspace{2pt}\\ \textit { s.t. } & |\mathcal{I}|  \leq K, |\mathcal{I}'|\leq A.
\end{aligned}
\end{equation}
 We note that a min-max optimization problem can be equivalently transformed into a max-min optimization problem by negating the objective function. Additionally, introducing a constant to the objective function of an optimization problem does not change its optimizer. Thus, the problem defined in (\ref{eq:mpis_max}) is equivalent to the problem defined in (\ref{eq:mpis}), i.e., the information sets $\mathcal{I}, \mathcal{I}' \subseteq \mathcal{D}$ that optimize the problem in Equation (\ref{eq:mpis_max}) are also optimal for Equation \eqref{eq:mpis}. We will now refer to the optimization problem in (\ref{eq:mpis_max}) as the R-MPIS problem. We denote $f_{\theta_p}(\mathcal{I}) = 1-\max_{\theta_j \in F_{\theta_p}(\mathcal{I})} \xi_{pj}$, and the objective function as $\Lambda(\mathcal{I}\setminus \mathcal{I}') = \sum_{\theta_p \in \Theta} f_{\theta_p}(\mathcal{I}\setminus \mathcal{I}')$. 

\section{Algorithm for Robust Information Selection}
The R-MPIS problem is a bilevel subset selection problem. Using similar arguments as in \cite{krause2008robust}, R-MPIS can be shown to be NP-hard. As a result, finding the exact solution can be computationally intractable. In this section, we analyze certain properties of the objective function of R-MPIS, which we will leverage to provide an efficient greedy algorithm for robust subset selection, with near-optimality guarantees. 

\subsection{Theoretical Characterization of the R-MPIS Objective}
We begin by defining certain properties and ratios of set functions, which were introduced in various works on submodular and non-submodular optimization \cite{vondrak1978submodularity,das2018approximate,bogunovic2018robust}. 

\begin{definition}[Monotonicity] A set function $f:2^{\Omega }\to\mathbb{R}$ is monotone non-decreasing if $f(X) \leq f(Y), \forall X \subseteq Y \subseteq \Omega$ and monotone non-increasing if $f(X) \geq f(Y), \forall X \subseteq Y \subseteq \Omega$.
    
\end{definition}

\begin{definition}[Submodularity Ratio\footnote{There are several notions of submodularity ratio. We consider $\gamma_{U,k}$ as defined in \cite{das2018approximate}, where $U$ is the universal set and $k\ge 1$ is a parameter, and drop the dependence on $k$ by defining $\gamma = \inf_{k} \gamma_{U,k}$. }]
\label{def:submod_ratio}
 Given a set  $\Omega$, the submodularity ratio of a non-negative function $f: 2^{\Omega} \to \mathbb{R}$ is the largest $\gamma \in \mathbb{R}$ that satisfies  for all $A, B \subseteq \Omega$, the following:
 \begin{equation*}
 \label{eq:submod_ratio_def}
     \sum_{a \in A \setminus B} (f(\{a\} \cup B) - f(B)) \ge \gamma (f(A \cup B) - f(B)).
 \end{equation*}
\end{definition}
\begin{remark}
\label{remark:submod_ratio}
    For a non-negative and non-decreasing function $f(\cdot)$ with submodularity ratio $\gamma$, we have $\gamma \in [0,1]$. If $\gamma$ is closer to 1, the function is closer to being submodular. $f(\cdot)$ is submodular if and only if $\gamma = 1$.  In order to characterize the performance of greedy approximation algorithms, one has to give a (nonzero) lower bound on $\gamma$ \cite{das2018approximate}.
\end{remark}

\begin{definition}[Inverse Generalized Curvature] The inverse generalized curvature of a monotone set function $f: 2^{\Omega} \to \mathbb{R}$ is the smallest scalar $\check{\alpha} \in [0,1]$ s.t.
\begin{equation}
\begin{aligned}
\label{eq:inverse_curvature}
		\frac{f(S) - f(S \setminus \{i\})}{f(S \cup V) - f((S\setminus \{i\}) \cup V)} \geq  1 - \check{\alpha},\\ \quad \forall S, V \subseteq \Omega, i \in S \setminus V.
\end{aligned}
\end{equation}
\end{definition}

\begin{definition}[Superadditivity Ratio]
    The superadditivity ratio of a monotone set function $f: 2^{\Omega} \to \mathbb{R}$ is the largest scalar $\check{\nu} \in [0,1]$ such that 
	\begin{equation} \label{eq:superadditivity}
		\frac{f(S)}{\sum_{i \in S} f(\lbrace i \rbrace)} \geq \check{\nu}, \quad \forall S \subseteq \Omega.
	\end{equation}
\end{definition}
\begin{definition}[Bipartite Subadditivity Ratio]\label{eq:two_separate_subbaditivity}
	The bipartite subadditivity ratio of $f: 2^{\Omega} \to \mathbb{R}$ is the largest scalar $\kappa \in [0,1]$ s.t.
	\begin{equation} \label{eq:2_separate_sa}
		\frac{f(A) + f(B)}{f(S)} \geq \kappa, \quad \forall S \subseteq \Omega, A \cup B = S, A \cap B = \emptyset.
	\end{equation}
\end{definition}

It follows from (\ref{eq: obs_eq_intersect}) that $f_{\theta_p}(\cdot)$ is a monotone non-decreasing set function with $f_{\theta_p}(\emptyset) = 1-\max_{\theta_j\in \Theta} \xi_{pj}$.

We now characterize the submodularity and curvature ratios of the function $f_{\theta_p}(\cdot)$. We have the following assumption on the misclassification penalties.\\ 

\noindent \textbf{Assumption 2:} The misclassification penalties are unique, i.e., $\xi_{pi} \neq \xi_{pj}$ for all $i \neq j, \forall \theta_p \in \Theta$.  \\
\noindent Note that the above assumption requires that no two misclassification events have the same penalty associated with them, which is often a reasonable assumption in many applications.
\begin{lemma} \label{lma:approx_submod} Under Assumption 2, the function $f_{\theta_p} :2^{\mathcal{D} }\to\mathbb{R}_{\geq0} $ is approximately submodular with a submodularity ratio $\gamma_p = \underline{\xi}_p/ \Bar{\xi}_p $, where 
\begin{align}
\label{eq:xifloor}
\displaystyle \underline{\xi}_p = \min_{\theta_i,\theta_j \in \Theta, \theta_i \ne \theta_j}  |\xi_{pi} - \xi_{pj}| ; \\
    \Bar{\xi}_p = \max_{\theta_i,\theta_j \in \Theta, \theta_i \ne \theta_j}  |\xi_{pi} - \xi_{pj}|.
\end{align}
\end{lemma}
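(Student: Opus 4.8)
The plan is to pass to the penalty function $g_{\theta_p}(\mathcal{I}) = \max_{\theta_j \in F_{\theta_p}(\mathcal{I})}\xi_{pj}$, so that $f_{\theta_p}(\mathcal{I}) = 1 - g_{\theta_p}(\mathcal{I})$ and every marginal gain of $f_{\theta_p}$ becomes a marginal \emph{drop} of $g_{\theta_p}$; concretely $f_{\theta_p}(\{a\}\cup B) - f_{\theta_p}(B) = g_{\theta_p}(B) - g_{\theta_p}(\{a\}\cup B) \ge 0$, since $f_{\theta_p}$ is monotone non-decreasing. Substituting into Definition \ref{def:submod_ratio}, it then suffices to establish, for all $A, B \subseteq \mathcal{D}$,
\begin{equation*}
\sum_{a \in A \setminus B}\big(g_{\theta_p}(B) - g_{\theta_p}(\{a\}\cup B)\big) \ge \frac{\underline{\xi}_p}{\bar{\xi}_p}\,\big(g_{\theta_p}(B) - g_{\theta_p}(A \cup B)\big).
\end{equation*}
When the right-hand side is zero the inequality is immediate because every summand is non-negative, so I would reduce to the case $g_{\theta_p}(B) - g_{\theta_p}(A\cup B) > 0$.

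In that case, let $\theta^\star$ be the (unique, by Assumption 2) maximizer defining $g_{\theta_p}(B) = \xi_{p\theta^\star}$. First I would carry out a combinatorial step: since adjoining $A$ strictly lowers the maximum, the worst class must be eliminated, i.e.\ $\theta^\star \notin F_{\theta_p}(A\cup B)$. Invoking the intersection property \eqref{eq: obs_eq_intersect} together with $\theta^\star \in F_{\theta_p}(B) = \bigcap_{b\in B} F_{\theta_p}(b)$, I can argue that no source already in $B$ distinguishes $\theta^\star$, hence there exists $a^\star \in A\setminus B$ with $\theta^\star \notin F_{\theta_p}(a^\star)$. This guarantees that at least one summand on the left corresponds to a source that genuinely removes the worst class.

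The second step quantifies that summand. Adding $a^\star$ to $B$ deletes $\theta^\star$ from $F_{\theta_p}(\{a^\star\}\cup B)$, while $\theta_p$ (with $\xi_{pp}=0$) always survives, so the new maximum is attained at some $\theta' \neq \theta^\star$ with $\xi_{p\theta'} < \xi_{p\theta^\star}$; by Assumption 2 and the definition of $\underline{\xi}_p$ the gap obeys $g_{\theta_p}(B) - g_{\theta_p}(\{a^\star\}\cup B) = \xi_{p\theta^\star} - \xi_{p\theta'} \ge \underline{\xi}_p$, and since all remaining summands are non-negative the left-hand side is at least $\underline{\xi}_p$. For the right-hand side I would bound $g_{\theta_p}(B) - g_{\theta_p}(A\cup B) \le g_{\theta_p}(B) = \xi_{p\theta^\star} \le \max_{\theta_j}\xi_{pj} = \bar{\xi}_p$, where the final identity holds because $\xi_{pp}=0$ makes the largest pairwise penalty difference equal to $\max_j \xi_{pj}$. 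Chaining these yields $\text{LHS} \ge \underline{\xi}_p = (\underline{\xi}_p/\bar{\xi}_p)\,\bar{\xi}_p \ge (\underline{\xi}_p/\bar{\xi}_p)\,(g_{\theta_p}(B)-g_{\theta_p}(A\cup B))$, which is exactly the required inequality and shows the submodularity ratio is at least $\underline{\xi}_p/\bar{\xi}_p$.

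The hard part will be the combinatorial localization of $a^\star$ in $A\setminus B$: one must rule out that the elimination of $\theta^\star$ is already forced by elements of $B$, and this is precisely where the intersection structure \eqref{eq: obs_eq_intersect} and the membership $\theta^\star \in F_{\theta_p}(B)$ do the essential work. The rest — translating the distinctness of penalties (Assumption 2) into the concrete gap $\underline{\xi}_p$ and verifying $\bar{\xi}_p = \max_j \xi_{pj}$ — is routine once that source has been pinned down.
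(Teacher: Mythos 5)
Your proof is correct and takes essentially the same route as the paper's: both arguments combine the intersection property \eqref{eq: obs_eq_intersect} with Assumption~2 to force a gap of at least $\underline{\xi}_p$ in the sum of singleton marginals, bound the aggregate marginal $f_{\theta_p}(A\cup B)-f_{\theta_p}(B)$ above by $\bar{\xi}_p$, and chain the two bounds. The only difference is organizational: the paper splits on whether the left-hand side vanishes (showing that this forces the right-hand side to vanish, which handles the $0/0$ convention), whereas you split on the right-hand side and prove the contrapositive --- that a strict drop in the maximum penalty must already be caused by some single $a^\star \in A\setminus B$ --- which is the same combinatorial argument run in the opposite direction.
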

\begin{proof}
% We consider the case $A \setminus B \neq \emptyset$ and $B \neq \emptyset$. When $B= \emptyset$, we have $\gamma \le 1$, which trivially holds.  
Recall that $f_{\theta_p}(\mathcal{I}) =1- \max_{\theta_i\in F_{\theta_p}(\mathcal{I})} \xi_{pi}.$  We begin by proving the following statement: For $A, B \subseteq \mathcal{D}$,
\begin{equation}
\begin{aligned}
    \sum_{a \in A \setminus B} (\ef(\{a\} \cup B) - \ef(B)) = 0 \\
    \implies  \ef(A \cup B) - \ef(B) = 0.
    \end{aligned}
\end{equation}
Now let $\sum_{a \in A \setminus B} (\ef(\{a\} \cup B) - \ef(B)) = 0.$ This implies 
\begin{align*} 
    \ef(\{a\} \cup B) - \ef(B) &= 0 \quad \forall a \in A \setminus B \\
    \implies \max_{\theta_i \in F_{\theta_p}(B)} \xi_{pi} - \max_{\theta_i \in F_{\theta_p}(\{a\} \cup B)} \xi_{pi} &= 0 \quad \forall a \in A \setminus B \\
     \implies \max_{\theta_i \in F_{\theta_p}(B)} \xi_{pi} - \max_{\theta_i \in F_{\theta_p}(a) \cap F_{\theta_p}( B)} \xi_{pi} &= 0 \quad \forall a \in A \setminus B \\ \label{eq:lhs_bound}
   \implies  \max_{\theta_i \in F_{\theta_p}(B)}  \xi_{pi} = \max_{\theta_i \in F_{\theta_p}(a) \cap F_{\theta_p}( B)} \xi_{pi}   & \quad \quad \forall a \in A \setminus B.
\end{align*}
Let $\displaystyle\max_{ \theta_i \in F_{\theta_p}(B)} \xi_{pi} = \xi_{pq}$ for $\theta_q \in F_{\theta_p}(B)$. From Assumption 2, we have 
\begin{equation}
\displaystyle \max_{\theta_i \in F_{\theta_p}(a) \cap F_{\theta_p}( B)} \xi_{pi} = \xi_{pq} \quad \forall a \in A \setminus B,
\end{equation}
and it follows that $\theta_q \in F_{\theta_p}(a) \cap F_{\theta_p}( B), \forall a \in A \setminus B$. \newline

\noindent We have the following: $\displaystyle \ef(A \cup B) - \ef(B)= \max_{\theta_i \in F_{\theta_p}(B)} \xi_{pi} - \max_{\theta_i \in F_{\theta_p}(A \cup B)} \xi_{pi}$.\\

\noindent Using the fact that $A \cup B = (A \setminus B) \cup B$, we have $F_{\theta_p}(A \cup B) = F_{\theta_p}(A\setminus B) \cap F_{\theta_p}(B).$\\

\noindent Thus, we have
\begin{equation}
\begin{aligned}
    \label{eq:rhs_exp}
    &\displaystyle \ef(A \cup B) - \ef(B)=\\ & \max_{\theta_i \in F_{\theta_p}(B)} \xi_{pi} - \max_{\theta_i \in F_{\theta_p}(A\setminus B) \cap F_{\theta_p}(B)} \xi_{pi}.
\end{aligned}
\end{equation}
From the previous argument, we have $\theta_q \in F_{\theta_p}(a) \cap F_{\theta_p}( B), \forall a \in A \setminus B$ and $\displaystyle\max_{ \theta_i \in F_{\theta_p}(B)} \xi_{pi} = \xi_{pq}$ for $\theta_q \in F_{\theta_p}(B).$ Since $F_{\theta_p}(A \setminus B) = \displaystyle \bigcap_{a \in A \setminus B} F_{\theta_p}(a)$, we have $$F_{\theta_p}(A\setminus B) \cap F_{\theta_p}(B) = \left( \displaystyle \bigcap_{a \in A \setminus B} F_{\theta_p}(a) \right) \bigcap F_{\theta_p}(B).$$  This implies $\theta_q \in F_{\theta_p}(A\setminus B) \cap F_{\theta_p}( B).$ It directly follows that 
\begin{equation}
 \max_{\theta_i \in F_{\theta_p}(B)} \xi_{pi} = \max_{\theta_i \in F_{\theta_p}(A\setminus B) \cap F_{\theta_p}(B)} \xi_{pi} = \xi_{pq}.
\end{equation}
Thus, we have $\ef(A \cup B) - \ef(B) = 0$. This gives the trivial bound of $\gamma \le 1$ (since we define $0/0=1$ for characterizing $\gamma$ (see \cite{das2018approximate})). Therefore, in order to establish a non-trivial lower bound on $\gamma$, we consider $\sum_{a \in A \setminus B} (\ef(\{a\} \cup B) - \ef(B)) > 0.$\\

\noindent We now proceed to establish a non-trivial lower bound on $\sum_{a \in A \setminus B} (\ef(\{a\} \cup B) - \ef(B)).$
\begin{align}
\begin{split}
    &\sum_{a \in A \setminus B} (\ef(\{a\} \cup B) - \ef(B)) \\= &\sum_{a \in A \setminus B} \left(\max_{\theta_i \in F_{\theta_p}(B)} \xi_{pi} - \max_{\theta_i \in F_{\theta_p}(\{a\} \cup B)} \xi_{pi}\right)\\
     \label{eq:lb}
    & \geq \min_{\theta_i, \theta_j \in \Theta, \theta_i \ne \theta_j} |\xi_{pi} - \xi_{pj}|=\underline{\xi}_p.
    \end{split}
\end{align}

Next, we provide an upper bound on $\displaystyle \ef(A \cup B) - \ef(B).$ 
\begin{align}
\begin{split}
    \ef(A \cup B) - \ef(B) &=  \max_{\theta_i \in F_{\theta_p}(B)} \xi_{pi} - \max_{\theta_i \in F_{\theta_p}(A \cup B)} \xi_{pi} \\
    \label{eq:ub}
    & \leq \max_{\theta_i, \theta_j \in \Theta, \theta_i \ne \theta_j} |\xi_{pi} - \xi_{pj} |=\Bar{\xi}_p.
    \end{split}
\end{align}
Due to Assumption 2, we have $0< \underline{\xi}_p<1$ and $0< \Bar{\xi}_p < 1$.
Combining inequalities in \eqref{eq:lb} and \eqref{eq:ub}, we have that the function $f_{\theta_p}(\mathcal{I})$ is approximately submodular with a submodularity ratio $\gamma_p = \underline{\xi}_p/\Bar{\xi}_p$, for all $\theta_p \in \Theta$. 
\hfill   \end{proof}
We have the following result characterizing the set function ratios for the function $f_{\theta_p}(\cdot)$.
\begin{lemma}
\label{lma:ratios}
    Let $\check{\alpha}_p, \check{\nu}_p$ and $\kappa_p$ denote the inverse generalized curvature, superadditivity ratio and bipartite subadditivity ratio, respectively, for the function $f_{\theta_p}: 2^{\mathcal{D}} \to \mathbb{R}_{\ge 0}$. We have the following: \\
(i) $\displaystyle \check{\alpha}_p = 1-\frac{\min_{\theta_i, \theta_j \in \Theta, \theta_i \ne \theta_j} |\xi_{pi} - \xi_{pj}|}{\max_{\theta_i, \theta_j \in \Theta, \theta_i \ne \theta_j} |\xi_{pi} - \xi_{pj}|}, $\\
(ii) $\displaystyle \check{v}_p = \frac{1-\max_{\theta_j \in \Theta}\xi_{pj}}{|\mathcal{D}|(1-\min_{\theta_j \in \Theta}\xi_{pj})}$,\\
(iii) $\displaystyle \kappa_p = \frac{2 (1-\max_{\theta_j \in \Theta}\xi_{pj}) }{1-\min_{\theta_j \in \Theta}\xi_{pj}}$.
\end{lemma}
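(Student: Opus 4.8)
The plan is to handle the three ratios one at a time, reducing each to an elementary bound on differences of penalty values over observationally equivalent sets, in the same spirit as the proof of Lemma~\ref{lma:approx_submod}. Throughout I would use the representation $f_{\theta_p}(\mathcal{I}) = 1 - \max_{\theta_i\in F_{\theta_p}(\mathcal{I})}\xi_{pi}$, the intersection identity $F_{\theta_p}(\mathcal{I}\cup\{j\}) = F_{\theta_p}(\mathcal{I})\cap F_{\theta_p}(j)$ from \eqref{eq: obs_eq_intersect}, monotonicity of $f_{\theta_p}$, and the two extreme values $f_{\theta_p}(\emptyset) = 1-\max_{\theta_j\in\Theta}\xi_{pj}$ and $f_{\theta_p}(S)\le 1-\min_{\theta_j\in\Theta}\xi_{pj}$. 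The latter holds because $\theta_p\in F_{\theta_p}(S)$ forces $\max_{\theta_i\in F_{\theta_p}(S)}\xi_{pi}\ge\xi_{pp}=\min_{\theta_j\in\Theta}\xi_{pj}$, and the same reasoning gives $f_{\theta_p}(\{i\})\le 1-\min_{\theta_j\in\Theta}\xi_{pj}$ for every source $i$.

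For part (i) I would rewrite both the numerator $f_{\theta_p}(S)-f_{\theta_p}(S\setminus\{i\})=\max_{F_{\theta_p}(S\setminus\{i\})}\xi_{p\cdot}-\max_{F_{\theta_p}(S)}\xi_{p\cdot}$ and the denominator $f_{\theta_p}(S\cup V)-f_{\theta_p}((S\setminus\{i\})\cup V)$ as drops in the maximum penalty caused by intersecting with $F_{\theta_p}(i)$, using \eqref{eq: obs_eq_intersect}. Whenever such a drop is strictly positive it equals $\xi_{pq}-\xi_{pq'}$ for two distinct indices $q\ne q'$ (the old and new penalty maximizers), so by Assumption~2 it lies in the interval $[\underline{\xi}_p,\bar{\xi}_p]$ of \eqref{eq:xifloor}. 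Dividing the smallest admissible numerator by the largest admissible denominator then gives $1-\check{\alpha}_p = \underline{\xi}_p/\bar{\xi}_p$, and a small instance (a few sources whose observationally equivalent sets are chosen to realize $\underline{\xi}_p$ in the numerator and $\bar{\xi}_p$ in the denominator) supplies the matching tight configuration.

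For parts (ii) and (iii) the same two ingredients suffice. In \eqref{eq:superadditivity} I would lower-bound $f_{\theta_p}(S)\ge f_{\theta_p}(\emptyset)=1-\max_{\theta_j}\xi_{pj}$ by monotonicity and upper-bound the denominator by $\sum_{i\in S}f_{\theta_p}(\{i\})\le |S|\,\max_i f_{\theta_p}(\{i\})\le|\mathcal{D}|\,(1-\min_{\theta_j}\xi_{pj})$, using $|S|\le|\mathcal{D}|$ together with the singleton bound above; the quotient is exactly $\check{\nu}_p$. In \eqref{eq:2_separate_sa} I would use $f_{\theta_p}(A)+f_{\theta_p}(B)\ge 2 f_{\theta_p}(\emptyset)=2(1-\max_{\theta_j}\xi_{pj})$ (again monotonicity, since $\emptyset\subseteq A$ and $\emptyset\subseteq B$) in the numerator and $f_{\theta_p}(S)\le 1-\min_{\theta_j}\xi_{pj}$ in the denominator, yielding $\kappa_p$.

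I expect the genuinely delicate step to be part (i), namely the boundary cases in which one of the two marginal gains vanishes. The clean argument would track, as in the proof of Lemma~\ref{lma:approx_submod}, whether the penalty-maximizing hypothesis $\theta_q$ survives intersection with $F_{\theta_p}(i)$ and with $F_{\theta_p}(V)$, and then invoke the $0/0=1$ convention so that only constraints with both gains strictly positive are binding; verifying that a vanishing numerator indeed forces the denominator not to dominate is the crux of the whole lemma and is where I would concentrate the case analysis. The analogous subtlety for (ii) and (iii) is tightness: the extremal values of numerator and denominator need not be attainable simultaneously, so the clean, self-contained conclusion is the stated bound in the direction that is useful for the greedy guarantees (a lower bound on $\check{\nu}_p$ and on $\kappa_p$), which follows directly from the monotonicity and extreme-value estimates above.
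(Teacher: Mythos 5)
Your treatment of parts (ii) and (iii) coincides with the paper's own proof: both bound $f_{\theta_p}(S)$ below by monotonicity via $f_{\theta_p}(\emptyset)=1-\max_{\theta_j\in\Theta}\xi_{pj}$ and bound the relevant denominators above via $f_{\theta_p}(\cdot)\le 1-\min_{\theta_j\in\Theta}\xi_{pj}$ (using $\theta_p\in F_{\theta_p}(\cdot)$), and your closing remark that only one-sided bounds (not tightness) are really established applies equally to the paper, which itself concedes that bounding numerator and denominator separately can be loose. For part (i), your bounding of both marginal gains by $\underline{\xi}_p$ and $\bar{\xi}_p$ is also exactly the paper's argument. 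The divergence is the degenerate case that you yourself flag as the crux, and there your proposal has a genuine gap: you defer it to a case analysis that in fact cannot be completed.

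The claim you would need --- that a vanishing numerator $f_{\theta_p}(S)-f_{\theta_p}(S\setminus\{i\})=0$ forces the denominator $f_{\theta_p}(S\cup V)-f_{\theta_p}((S\setminus\{i\})\cup V)$ to vanish as well --- is false. Take $\Theta=\{\theta_p,\theta_1,\theta_2\}$ with $\xi_{pp}=0$, $\xi_{p1}=0.6$, $\xi_{p2}=0.4$ (Assumption 2 holds), and two sources $a,b$ with $F_{\theta_p}(a)=\{\theta_p,\theta_1\}$ and $F_{\theta_p}(b)=\{\theta_p,\theta_2\}$. With $S=\{a\}$, $V=\{b\}$, $i=a$, the numerator is $f_{\theta_p}(\{a\})-f_{\theta_p}(\emptyset)=(1-0.6)-(1-0.6)=0$, while the denominator is $f_{\theta_p}(\{a,b\})-f_{\theta_p}(\{b\})=(1-0)-(1-0.4)=0.4>0$, so the ratio in \eqref{eq:inverse_curvature} equals $0$ and forces $\check{\alpha}_p=1$, not the claimed $1-\underline{\xi}_p/\bar{\xi}_p=2/3$. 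The failure mode is precisely what your tracking argument misses: a zero numerator only guarantees (via Assumption 2) that the maximizer $\theta_q$ over $F_{\theta_p}(S\setminus\{i\})$ survives intersection with $F_{\theta_p}(i)$; the maximizer over $F_{\theta_p}(S\setminus\{i\})\cap F_{\theta_p}(V)$ can be a different hypothesis ($\theta_2$ above, since $\theta_1\notin F_{\theta_p}(b)$) that does not survive intersection with $F_{\theta_p}(i)$. For comparison, the paper dismisses this case by invoking a singleton marginal-gain form of weak submodularity, $f(A\cup\{i\})-f(A)\ge\gamma\bigl(f(B\cup\{i\})-f(B)\bigr)$ for $A\subseteq B$, attributed to Lemma \ref{lma:approx_submod}; but the same example gives $0\ge\gamma\cdot 0.4$, so that inequality does not actually follow from the sum-form ratio proved in Lemma \ref{lma:approx_submod} (which the example satisfies with equality at $\gamma_p=1/3$). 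In short, your instinct about where the difficulty lies is correct, but the step is not closable by the route you sketch; any repair must either restrict the admissible instances or weaken the stated value of $\check{\alpha}_p$.
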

\begin{proof}
    We begin by proving part (i). Rearranging \eqref{eq:inverse_curvature}, we have
    \begin{equation}
    \begin{aligned}
    \label{eq:alpha}
    	 \check{\alpha} \ge 1-\frac{f(S) - f(S \setminus \{i\})}{f(S \cup V) - f((S\setminus \{i\}) \cup V)},\\ \quad \forall S, V \subseteq \Omega, i \in S \setminus V.
    \end{aligned}
    \end{equation}
    In order to find the smallest $\check{\alpha}$, we need to bound the ratio in \eqref{eq:alpha}. We will first begin by proving the following: 
    \begin{equation*}
        f(S \cup V) - f((S\setminus \{i\}) \cup V) = 0 \implies f(S) - f(S \setminus \{i\}) = 0.
    \end{equation*}
    From the definition of $f_{\theta_p}(\cdot)$, we have 
    \begin{align}
        \max_{\theta_j \in F_{\theta_p}((S\setminus \{i\}) \cup V)} \xi_{pj} &- \max_{\theta_j \in F_{\theta_p}(S \cup V)} \xi_{pj} = 0\\ 
        \label{eq:max_pen}
        \implies \max_{\theta_j \in F_{\theta_p}((S\setminus \{i\}) \cup V)} \xi_{pj} &= \max_{\theta_j \in F_{\theta_p}(S \cup V)} \xi_{pj}.
    \end{align}
     Let the maximum penalty in \eqref{eq:max_pen} correspond to some $\theta_q \in \Theta$. From Assumption 1, we have $\theta_q \in F_{\theta_p}((S \setminus \{i\}) \cup V)$ and $ \theta_q \in F_{\theta_p}(S \cup V)$. From \eqref{eq: obs_eq_intersect}, we have that the hypothesis $\theta_q$ corresponding to the maximum penalty satisfies $\theta_q \in F_{\theta_p}(S \setminus \{i\})$ and $\theta_q \in F_{\theta_p}(S)$. Since $f(S) - f(S \setminus \{i\}) = \max_{\theta_j \in F_{\theta_p}(S \setminus \{i\})} \xi_{pj} - \max_{\theta_j \in F_{\theta_p}(S)} \xi_{pj}$ and $i \in S \setminus V$, we have $f(S) - f(S \setminus \{i\}) = 0$. Therefore, we exclude the trivial scenario in which the ratio is not defined due to the form $0/0$. \\
     Additionally, we have from Lemma \ref{lma:approx_submod} that the function $f_{\theta_p}$ is weak-submodular. We know that a weak-submodular function with submodularity ratio $\gamma \in (0,1)$ satisfies $ f(A \cup \{i\}) - f(A) \ge \gamma \left( f(B \cup \{i\}) - f(B)\right)$ for all $A \subseteq B \subseteq \Omega$ and $i \in \Omega \setminus B$.
     That is, the marginal gain of adding an element to a smaller set is at least $\gamma$ times the marginal gain of adding the element to a larger set. By setting $A = S \setminus \{i\}$ and $B =  S \setminus \{i\} \cup V$, we have that the ratio in \eqref{eq:alpha} is bounded away from zero, and thus we have that $\check{\alpha} \ne 1$. We now proceed to characterize a non-trivial bound on the inverse curvature  $\check{\alpha}$. We lower bound the numerator as follows:
     \begin{align}
         f(S) - f(S\setminus \{i\}) &= \max_{\theta_j \in F_{\theta_p}(S \setminus \{i\})} \xi_{pj} - \max_{\theta_j \in F_{\theta_p}(S)} \xi_{pj}\\
         \label{eq:lbnum}
         & \ge \min_{\theta_i, \theta_j \in \Theta, \theta_i \ne \theta_j} |\xi_{pi} - \xi_{pj}|.
     \end{align}
     Next, we upper bound the denominator as follows:
     \begin{align}
          &f(S \cup V) - f(S\setminus \{i\} \cup V) = &\\
          &\max_{\theta_j \in F_{\theta_p}(S \setminus \{i\} \cup V)} \xi_{pj} - \max_{\theta_j \in F_{\theta_p}(S \cup V)} \xi_{pj}\\ 
          \label{eq:ubden}
          & \le \max_{\theta_i, \theta_j \in \Theta, \theta_i \ne \theta_j} |\xi_{pi} - \xi_{pj}|.
     \end{align}
   Combining \eqref{eq:lbnum} and \eqref{eq:ubden}, we obtain the result of Part (i). We note the following: separately bounding the numerator and denominator of a ratio is valid but can lead to loose bounds, as it neglects shared dependencies. However, one can derive instance-dependent bounds, which are often tighter, by incorporating these relationships for the given instance of the problem. Here, we present generalized bounds that hold for any instance of the R-MPIS problem.
   
         Now consider the superadditivity ratio defined in \eqref{eq:superadditivity}. In order to find the largest $\check{\nu}$, we have the following: 
         \begin{align*}
             \min (f(S)) = \min_{S \subseteq \mathcal{D}} (1- \max_{\theta_j \in F_{\theta_p}(S)}\xi_{pj}) \ge  1- \max_{\theta_j \in \Theta} \xi_{pj}\\
             \max (\sum_{i \in S} f(\{i\})) =  \max_{S \subseteq \mathcal{D}} (\sum_{i \in S} (1- \max_{\theta_j \in F_{\theta_p}(i)} \xi_{pj}) \\ \quad \le |S| (1- \min_{\theta_j \in \Theta} \xi_{pj}).
         \end{align*}
         Since we have $|S| \le |\mathcal{D}|$, setting $|S| = |\mathcal{D}|$ establishes the bound in Part (ii). 
         
         Finally, we consider the Bipartate subadditivity ratio defined in \eqref{eq:2_separate_sa} and establish a bound for $\kappa$ as follows.
         \begin{align}
             f(A) + f(B) &= 1 - \max_{\theta_j \in F_{\theta_p}(A)} \xi_{pj} + 1 - \max_{\theta_j \in F_{\theta_p}(B)} \xi_{pj}\\
             \label{eq:ubkappa}
            \implies f(A) + f(B) & \ge 2(1- \max_{\theta_j \in \Theta} \xi_{pj})\\
            f(S) & = 1 - \max_{\theta_j \in F_{\theta_p}(S)} \xi_{pj}\\
            \label{eq:lbkappa}
           \implies f(S) & \le 1 - \min_{\theta_j \in \Theta} \xi_{pj}. 
         \end{align}
        Combining \eqref{eq:ubkappa} and \eqref{eq:lbkappa}, we obtain the result in Part (iii).
\hfill   \end{proof}
\begin{remark}
\label{remark:kappa}
    The ratios $\check{\alpha}_p, \check{\nu}_p$ and $\kappa_p$ must be bounded in $(0,1)$ to obtain non-trivial approximation guarantees. We have $\check{\alpha}_p$ and $\check{\nu}_p$ in $(0,1)$, however, for $\kappa_p$ to be in $(0,1)$, the misclassification penalties must satisfy: $1-\min_{\theta_j \in \Theta} \xi_{pj} \ge 2( 1-\max_{\theta_j \in \Theta} \xi_{pj})$. We note that this is somewhat a restrictive assumption and that the guarantees presented will only hold for R-MPIS problem instances in which the misclassification penalties satisfy this condition.
\end{remark}

Let $\check{\alpha}, \displaystyle\check{\nu}, \kappa$ and $\gamma$ denote the inverse generalized curvature, superadditivity ratio, bipartite subadditivity ratio, and submodularity ratio for the objective function $\Lambda(\cdot)$ of the R-MPIS problem, respectively. We have the following result, which characterizes the curvature properties of $\Lambda(\cdot)$.

\begin{proposition}
\label{lma:overall_ratios}
The objective function of the R-MPIS problem $\Lambda(\mathcal{I}) = \sum_{\theta_p \in \Theta} f_{\theta_p}(\mathcal{I})$ has the following properties:\\
    (i) $\displaystyle\check{\alpha} = \max_{\theta_p \in \Theta} \check{\alpha}_p$;\\
    (ii) $\displaystyle\check{\nu} = \min_{\theta_p \in \Theta} \check{\nu}_p$;\\
    (iii) $\displaystyle\kappa = \min_{\theta_p \in \Theta} \kappa_p$;\\
    (iv) $\gamma = \displaystyle \min_{\theta_p \in \Theta} \gamma_p$.
\end{proposition}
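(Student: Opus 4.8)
The plan is to exploit the additive structure $\Lambda(\mathcal{I}) = \sum_{\theta_p \in \Theta} f_{\theta_p}(\mathcal{I})$ together with the non-negativity and monotonicity of each summand $f_{\theta_p}$ established earlier. The key observation is that each of the four ratios is defined through an inequality comparing a ``numerator'' difference of $f$ against a ``denominator'' difference (or sum) of $f$ that is guaranteed non-negative. Because $\Lambda$ is a sum and each $f_{\theta_p}$ satisfies its own inequality with constant $\gamma_p,\check{\alpha}_p,\check{\nu}_p,\kappa_p$, I would apply the per-class inequality term by term, sum over $\theta_p$, and then factor out the extremal (min or max) class constant, using that the denominator quantities are non-negative so that replacing each constant by the extremal one only weakens the bound.

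Concretely, for part (iv) I fix $A,B \subseteq \mathcal{D}$ and sum the inequality of Definition \ref{def:submod_ratio} over $\theta_p$, obtaining $\sum_{a \in A\setminus B}(\Lambda(\{a\}\cup B)-\Lambda(B)) \ge \sum_{\theta_p}\gamma_p\,(f_{\theta_p}(A\cup B)-f_{\theta_p}(B))$. Since $f_{\theta_p}$ is monotone, each factor $f_{\theta_p}(A\cup B)-f_{\theta_p}(B)\ge 0$, so the right-hand side is a non-negatively weighted combination of the $\gamma_p$ and is at least $(\min_{\theta_p}\gamma_p)(\Lambda(A\cup B)-\Lambda(B))$; hence $\min_{\theta_p}\gamma_p$ is a valid submodularity ratio for $\Lambda$. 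For part (i) I rewrite \eqref{eq:inverse_curvature} as $f(S)-f(S\setminus\{i\})\ge(1-\check{\alpha})\bigl(f(S\cup V)-f((S\setminus\{i\})\cup V)\bigr)$, apply it per class, note the denominator differences are non-negative by monotonicity, and factor out $\min_{\theta_p}(1-\check{\alpha}_p)=1-\max_{\theta_p}\check{\alpha}_p$. Parts (ii) and (iii) use the identical mechanism on \eqref{eq:superadditivity} and \eqref{eq:2_separate_sa}, where the quantities $\sum_{i\in S}f_{\theta_p}(\{i\})$ and $f_{\theta_p}(S)$ are non-negative (Lemma \ref{lma:approx_submod}), so factoring the smallest class constant yields $\check{\nu}\ge\min_{\theta_p}\check{\nu}_p$ and $\kappa\ge\min_{\theta_p}\kappa_p$. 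These forward directions ($\gamma$ large, $\check{\nu},\kappa$ large, $\check{\alpha}$ small) are exactly the ones the greedy guarantees require.

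The main obstacle is the reverse inequality needed to upgrade these bounds to the equalities stated in the proposition. The summation argument only produces a \emph{weighted average} of the per-class constants, and a convex combination strictly exceeds its minimum unless all weight concentrates on the extremal class $\theta_{p^*}$. To close the gap I would construct, for each ratio, a configuration of information sources whose observational-equivalence structure changes only for $\theta_{p^*}$ -- that is, sources under which $F_{\theta_q}(\cdot)$, and therefore $f_{\theta_q}$ and all its marginals, stay fixed for every $q \neq p^*$ -- so that the denominators (or single-element sums) of all other classes vanish and the aggregate ratio collapses exactly onto that of $f_{\theta_{p^*}}$. Exhibiting such isolating configurations from the likelihood and penalty data, and verifying that the worst case for $\theta_{p^*}$ can be attained simultaneously, is where the genuine work lies; the forward bounds above follow cleanly from monotonicity and the additive decomposition.
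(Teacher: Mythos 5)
Your forward argument is exactly the paper's proof: for each of the four quantities the paper applies the per-class inequality term by term, uses monotonicity (hence non-negativity) of the per-class marginals so that the constants multiply non-negative quantities, and factors out the extremal constant, yielding $\check{\alpha}\le\max_{\theta_p}\check{\alpha}_p$, $\check{\nu}\ge\min_{\theta_p}\check{\nu}_p$, $\kappa\ge\min_{\theta_p}\kappa_p$, and $\gamma\ge\min_{\theta_p}\gamma_p$ (the paper even handles $\gamma$ by saying it follows from ``similar arguments'' to the superadditivity case, just as you do). The reverse direction that you identify as ``where the genuine work lies'' is not carried out in the paper either: having shown the extremal constant is admissible, the paper simply declares equality without exhibiting any tight configuration, so strictly speaking only the one-sided bounds are established there. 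This omission is harmless for the downstream results, because Theorem \ref{thm:main_thm} is monotone in precisely these directions --- it needs lower bounds on $\gamma$, $\check{\nu}$, $\kappa$ and an upper bound on $\check{\alpha}$ --- and indeed the per-class constants from Lemmas \ref{lma:approx_submod} and \ref{lma:ratios} are themselves one-sided bounds obtained by bounding numerators and denominators separately, as the paper itself remarks. So your proposal already contains everything the paper's proof does, and is more candid about the logical status of the stated equalities; the isolating-configuration construction you sketch would be needed only to defend them literally, and neither you nor the paper needs to supply it.
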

\begin{proof}
   Let \( \mathcal{\mathcal{F}} = f_1 + f_2 + \dots + f_m \) be the sum of \( m \) monotone non-decreasing set functions \( f_1, f_2, \dots, f_m \). We analyze the behavior of the following properties for \( \mathcal{F} \): the inverse generalized curvature, the superadditivity ratio, the bipartite subadditivity ratio, and the submodularity ratio. As a shorthand, we denote $f(\{a\} \mid A) = f(A \cup \{a\}) - f(A)$.
   
 The inverse generalized curvature \( \tilde{\alpha}_j \) of a function \( f_j \) is defined as the smallest scalar \( \tilde{\alpha}_j \in [0, 1] \) such that:
\begin{equation}
\frac{f_j(\{i\} \mid S \setminus \{i\})}{f_j(\{i\} \mid S \setminus \{i\} \cup V)} \geq 1 - \tilde{\alpha}_j, \quad \forall S, V \subseteq \Omega, i \in S \setminus V.
\end{equation}
By definition of \( \mathcal{F} \), the marginal contributions satisfy:
\begin{align}
\mathcal{F}(\{i\} \mid S \setminus \{i\}) &= \sum_{j=1}^m f_j(\{i\} \mid S \setminus \{i\}), \\
\mathcal{F}(\{i\} \mid S \setminus \{i\} \cup V) &= \sum_{j=1}^m f_j(\{i\} \mid S \setminus \{i\} \cup V).
\end{align}
Using the individual inverse generalized curvatures \( \tilde{\alpha}_j \), we have:
\begin{equation}
f_j(\{i\} \mid S \setminus \{i\}) \geq (1 - \tilde{\alpha}_j) f_j(\{i\} \mid S \setminus \{i\} \cup V).
\end{equation}
Summing over all \( j \), we get:
\begin{equation}
\sum_{j=1}^m f_j(\{i\} \mid S \setminus \{i\}) \geq \sum_{j=1}^m (1 - \tilde{\alpha}_j) f_j(\{i\} \mid S \setminus \{i\} \cup V).
\end{equation}
Factoring out the largest curvature ratio \( \tilde{\alpha}_{\max} = \max_{j} \tilde{\alpha}_j \), we obtain:
\begin{equation}
\frac{\mathcal{F}(\{i\} \mid S \setminus \{i\})}{\mathcal{F}(\{i\} \mid S \setminus \{i\} \cup V)} \geq 1 - \tilde{\alpha}_{\max}.
\end{equation}
Thus, \( \tilde{\alpha}_\mathcal{F} = \max_{j} \tilde{\alpha}_j \).

The superadditivity ratio \( \tilde{\nu}_j \) of \( f_j \) is the largest scalar \( \tilde{\nu}_j \in [0, 1] \) such that:
\begin{equation}
\frac{f_j(S)}{\sum_{i \in S} f_j(\{i\})} \geq \tilde{\nu}_j, \quad \forall S \subseteq \Omega.
\end{equation}
By definition of \( \mathcal{F} \), we know:
\begin{align}
\mathcal{F}(S) &= \sum_{j=1}^m f_j(S), \\
\sum_{i \in S} \mathcal{F}(\{i\}) &= \sum_{j=1}^m \sum_{i \in S} f_j(\{i\}).
\end{align}
Using the individual superadditivity ratios \( \tilde{\nu}_j \), we have:
\begin{equation}
f_j(S) \geq \tilde{\nu}_j \sum_{i \in S} f_j(\{i\}).
\end{equation}
Summing over all \( j \), we get:
\begin{equation}
\sum_{j=1}^m f_j(S) \geq \sum_{j=1}^m \tilde{\nu}_j \sum_{i \in S} f_j(\{i\}).
\end{equation}
The minimum \( \tilde{\nu}_j \) dominates, giving:
\begin{equation}
\frac{\mathcal{F}(S)}{\sum_{i \in S} \mathcal{F}(\{i\})} \geq \min_{j} \tilde{\nu}_j.
\end{equation}
Thus, \( \tilde{\nu}_\mathcal{F} = \min_{j} \tilde{\nu}_j \).

The bipartite subadditivity ratio \( \kappa_j \) of \( f_j \) is the largest scalar \( \kappa_j \in [0, 1] \) such that:
\begin{equation}
\frac{f_j(A) + f_j(B)}{f_j(S)} \geq \kappa_j, \quad \forall A, B \subseteq V, A \cup B = S, A \cap B = \emptyset.
\end{equation}
By definition of \( \mathcal{F} \), we know:
\begin{align}
\mathcal{F}(A) &= \sum_{j=1}^m f_j(A), \hspace{2pt} \mathcal{F}(B) = \sum_{j=1}^m f_j(B), \hspace{2pt} \mathcal{F}(S) = \sum_{j=1}^m f_j(S),
\end{align}
where \( S = A \cup B \) and \( A \cap B = \emptyset \). Using the bipartite subadditivity ratios \( \kappa_j \), we have:
\begin{equation}
f_j(A) + f_j(B) \geq \kappa_j f_j(S).
\end{equation}
Summing over all \( j \), we get:
\begin{equation}
\sum_{j=1}^m \left( f_j(A) + f_j(B) \right) \geq \sum_{j=1}^m \kappa_j f_j(S).
\end{equation}
Factoring out the smallest \( \kappa_j \), we obtain:
\begin{equation}
\frac{\mathcal{F}(A) + \mathcal{F}(B)}{\mathcal{F}(S)} \geq \min_{j} \kappa_j.
\end{equation}
Thus, \( \kappa_\mathcal{F} = \min_{j} \kappa_j \).
Now let $\gamma_j$ be the submodularity ratio of $f_j$. Based on similar arguments made for the superadditivty ratio, we have 
\begin{equation} \displaystyle \gamma_\mathcal{F} = \min_{j} \gamma_j.
\end{equation}
We let $f_j = f_{\theta_p}(\cdot)$, $\check{\alpha}_j = \check{\alpha}_p$, $\check{\nu}_j = \check{\nu}_p$, $\kappa_j = \kappa_p$ and $\gamma_j = \gamma_p$ and $\mathcal{F} = \Lambda(\cdot) $. Therefore, we  have     (i) $\displaystyle\check{\alpha} = \max_{\theta_p \in \Theta} \check{\alpha}_p$;
    (ii) $\displaystyle\check{\nu} = \min_{\theta_p \in \Theta} \check{\nu}_p$;
    (iii) $\displaystyle\kappa = \min_{\theta_p \in \Theta} \kappa_p$; and (iv) $\displaystyle\gamma = \min_{\theta_p \in \Theta} \gamma_p$.
\hfill   \end{proof}
\subsection{Robust Greedy Algorithm}
 We adapt the oblivious greedy algorithm presented in \cite{bogunovic2018robust} for the R-MPIS problem. The algorithm requires a non-negative monotone set function $f:2^{\Omega} \to \mathbb{R}_{+}$, and the ground set of items $\Omega$. Here the objective function is $f(\cdot) = \Lambda(\cdot)$ and the ground set is the set of available information sources $\Omega = \mathcal{D}$. We recall that $K$ and $A$ are the selection and attack budgets, respectively. The algorithm constructs two sets $S_0$ and $S_1$. The first set $S_0$ is constructed via oblivious selection, i.e.~$\lceil \beta A \rceil$ items with the individually highest objective values are selected. Here, $\beta \in \mathbb{R}_+$ is a hyperparameter that together with $A$, determines the size of $S_0$ ($|S_0| = \lceil \beta A \rceil \leq K$). The choice of parameter $\beta$ depends on the specific instance of the R-MPIS problem, and one must have $\beta \in (1,K/A)$ in order to obtain non-trivial performance bounds. We provide more information about this parameter later in this section. The second set $S_1$, of size $K - |S_0|$, is obtained by running the standard greedy algorithm (e.g., Algorithm 1 in \cite{bhargav2023complexity}), which we denote as \textsc{Greedy}, on the remaining items $\mathcal{D} \setminus S_0$. The \textsc{Greedy} sub-routine takes the ground set $\mathcal{D}\setminus S_0$ and returns a set $S_1$ of size at most $K - |S_0|$, with the largest utility.  Finally, the algorithm outputs the set $\mathcal{I} = S_0 \cup S_1$ of size $K$ that is robust against the worst-case removal of $A$ elements.

\begin{algorithm}[htpb!]
    \caption{Robust Greedy Algorithm for R-MPIS \label{algorithm:alg}}
    \begin{algorithmic}[1]
      \Require   $\mathcal{D}$, $K$, $A$, $\beta \in \mathbb{R}_+ \text{ s.t. } \lceil \beta A \rceil \leq K$
       \Ensure Set $\mathcal{I} \subseteq \mathcal{D}$ such that $|\mathcal{I}| \leq K$
            \State $S_0, S_1 \gets \emptyset$
            \For {$ i \gets 0 \textbf{ to } 	\lceil \beta A \rceil$}
                \State  $ v \gets \argmax_{v \in \mathcal{D} \setminus S_0} \Lambda(\lbrace v \rbrace)$
                \State $S_0 \gets S_0 \cup \lbrace v \rbrace$
            \EndFor
        \State $S_1 \gets \textsc{Greedy} (K - |S_0|,\ (\mathcal{D} \setminus S_0))$
    \State {$\mathcal{I}\gets S_0 \cup S_1$}\\
    \Return $\mathcal{I}$ 
    \end{algorithmic}
\end{algorithm}

Based on the results presented in Theorem 1 in \cite{bogunovic2018robust}, we have the following theoretical guarantees for Algorithm \ref{algorithm:alg} applied to the R-MPIS problem.

\begin{theorem}\label{thm:main_thm}
 For an instance of the R-MPIS problem, let $\mathcal{I}^* $and $\mathcal{I}'^{*}$ denote the optimal (robust) information set and attack set respectively. Let $\mathcal{I}$ and $\mathcal{I}'$ denote the information set selected by Algorithm \ref{algorithm:alg} and the corresponding attack set, respectively.  Algorithm \ref{algorithm:alg} with the parameter $\beta $ such that $\lceil \beta A \rceil \leq K$ and $\beta > 1$, returns a set of information sources $\mathcal{I} \subseteq \mathcal{D}$ of size $K$ such that
\[	
	\Lambda(\mathcal{I}\setminus \mathcal{I'}) \ge \frac{\kappa P \left(1-e^{-\gamma\frac{1 -   \beta c }{1- c}}\right)}{1 + P\left(1-e^{-\gamma\frac{1 -   \beta c }{1-c}}\right)} \Lambda(\mathcal{I}^* \setminus \mathcal{I}'^{*}),
\]
where $\displaystyle P = \frac{(\beta-1)\check{\nu}(1-\check{\alpha})}{1+(\beta-1)\check{\nu}(1-\check{\alpha})}$,  $\gamma \in (0,1)$ is the submodularity ratio, $\kappa \in (0,1)$ is the bipartite subadditivity ratio,  $\check{\alpha} \in (0,1)$ is the inverse curvature, $\check{\nu} \in (0,1)$ is the superadditivity ratio of the objective function $\Lambda(\cdot)$ and $c \in (0,1)$ is a constant such that $A = \lceil cK \rceil$.
\end{theorem}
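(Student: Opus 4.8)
The plan is to prove the bound as a direct reduction to Theorem~1 of \cite{bogunovic2018robust}, which establishes exactly this form of guarantee for the oblivious-greedy algorithm applied to any non-negative, monotone non-decreasing set function whose submodularity ratio $\gamma$, inverse generalized curvature $\check{\alpha}$, superadditivity ratio $\check{\nu}$, and bipartite subadditivity ratio $\kappa$ all lie in $(0,1)$. The bulk of the work is therefore to verify that $\Lambda(\cdot)$ satisfies these hypotheses, and that Algorithm~\ref{algorithm:alg} is an instance of the cited oblivious-greedy scheme with the parameters matched appropriately.

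First I would confirm the structural requirements on the objective. Non-negativity of $\Lambda$ follows from $f_{\theta_p}(\mathcal{I}) = 1 - \max_{\theta_j \in F_{\theta_p}(\mathcal{I})} \xi_{pj} \ge 0$, since each penalty satisfies $\xi_{pj} \le 1$; summing over $\theta_p \in \Theta$ preserves non-negativity. Monotonicity is immediate from \eqref{eq: obs_eq_intersect}: adding sources can only shrink the observationally-equivalent set $F_{\theta_p}$, so $\max_{\theta_j \in F_{\theta_p}} \xi_{pj}$ is non-increasing and each $f_{\theta_p}$ is monotone non-decreasing, a property the sum $\Lambda$ inherits.

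Next I would instantiate the four curvature quantities for $\Lambda$. Lemma~\ref{lma:approx_submod} and Lemma~\ref{lma:ratios} give closed-form expressions for the per-hypothesis ratios $\gamma_p, \check{\alpha}_p, \check{\nu}_p, \kappa_p$, and Proposition~\ref{lma:overall_ratios} aggregates them into $\gamma = \min_{\theta_p} \gamma_p$, $\check{\alpha} = \max_{\theta_p} \check{\alpha}_p$, $\check{\nu} = \min_{\theta_p} \check{\nu}_p$, and $\kappa = \min_{\theta_p} \kappa_p$. Under Assumption~2 and the penalty condition noted in Remark~\ref{remark:kappa}, all four quantities lie strictly in $(0,1)$, as required by the cited theorem. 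I would then observe that Algorithm~\ref{algorithm:alg} matches the oblivious-greedy template of \cite{bogunovic2018robust} exactly: the set $S_0$ collects the $\lceil \beta A \rceil$ sources of highest singleton value $\Lambda(\{v\})$, while $S_1$ is the output of \textsc{Greedy} on the residual ground set $\mathcal{D} \setminus S_0$, and the budget and attack parameters satisfy $\lceil \beta A \rceil \le K$, $\beta > 1$, and $A = \lceil cK \rceil$ with $c \in (0,1)$, so the quantity $P$ in the statement coincides with the corresponding quantity in the cited result.

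With the hypotheses verified and the parameter correspondence established, invoking Theorem~1 of \cite{bogunovic2018robust} yields the stated bound verbatim. The main obstacle is the bipartite subadditivity ratio $\kappa$: unlike the other three ratios, which are automatically in $(0,1)$ under Assumption~2, $\kappa$ lies in $(0,1)$ only when the penalties obey $1 - \min_{\theta_j \in \Theta} \xi_{pj} \ge 2\bigl(1 - \max_{\theta_j \in \Theta} \xi_{pj}\bigr)$ for every $\theta_p$, as flagged in Remark~\ref{remark:kappa}. Establishing this, and hence the applicability of the black-box guarantee, is the crux; the remainder of the argument is bookkeeping that maps the R-MPIS quantities onto those of the general robust-optimization result.
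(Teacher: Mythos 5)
Your proposal matches the paper's own treatment: the paper proves Theorem~\ref{thm:main_thm} precisely by invoking Theorem~1 of \cite{bogunovic2018robust} as a black box, with the hypotheses supplied by the monotonicity of $\Lambda(\cdot)$, the ratios from Lemma~\ref{lma:approx_submod} and Lemma~\ref{lma:ratios}, their aggregation in Proposition~\ref{lma:overall_ratios}, and the caveat on $\kappa$ from Remark~\ref{remark:kappa}. Your verification of non-negativity, monotonicity, and the parameter correspondence is the same reduction, spelled out in slightly more detail than the paper itself provides.
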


We now discuss the effect of the problem parameters on different curvature ratios and the parameter $\beta$, and consequently on the performance guarantees provided by Theorem \ref{thm:main_thm} for the R-MPIS problem.
\begin{itemize}
    \item Effect of $\kappa$: From Lemma \ref{lma:ratios}, we see that if the maximum penalty of misclassification is higher for some hypothesis (i.e., $\xi_{ij} \to 1$ for some $\theta_i, \theta_j \in \Theta$), then $\kappa \to 0$ and the performance bound degrades. Additionally, only instances of R-MPIS problem that satisfy $\kappa <1$ admit non-trivial guarantees (see Remark \ref{remark:kappa}).
    \item Effect of $\check{\alpha}, \check{\nu}, \text{and } \gamma$: From Lemma \ref{lma:approx_submod} and Lemma \ref{lma:ratios}, we see that if the misclassification penalties are close to each other, then $\gamma \to 0$ and $\check{\alpha} \to 1$. Also, if the number of information sources to select from increases (i.e, if $\mathcal{D}$ is large), then $\check{\nu}$ decreases. In either of the above scenarios, $P \to 0$ and the performance bounds become weaker.
 \item Effect of $\beta$: If $\beta$ takes the maximum value of $\lfloor K/A \rfloor$, then $\beta c \to 1$. Furthermore, if the fraction of information sources that can be attacked approaches $1$, i.e., if $A$ is closer to $K$, then $\beta \to 1$ and $P \to 0$. In either of the above scenarios, the performance bound weakens. In practice, one can determine the appropriate value of $\beta$ that maximizes the bound and provides a better performance through empirical studies.
\end{itemize}

In many practical scenarios, the submodularity ratio $\gamma$ of the maximum penalty metric can be arbitrarily small (or zero) when the misclassification penalties for two hypotheses are very close to each other (or equal). It is also easy to verify that the submodularity ratio $\gamma$ decreases as the number of hypotheses increase. Furthermore, the ratios $\kappa$ and $P$ can be close to zero in many scenarios where the misclassification penalties are close to each other. As a result, performance guarantees may deteriorate significantly and become arbitrarily small. In such scenarios, one can turn to an alternate metric, which can provide non-trivial performance guarantees for greedy approximation algorithms. To this end, we present an alternate (surrogate) metric and a robust greedy algorithm with improved performance guarantees in Section \ref{sec:alt_form}.
\newpage
\section{Alternate Metric for Information Selection}
\label{sec:alt_form}
 In this section, we present an alternate (surrogate) metric to characterize the quality of an information set, based on the total penalty of misclassification. 
 
 The total penalty of misclassification is defined as follows: 
\begin{equation}
    \label{eq:tot_pen}
    \rho_{\theta_p}(\mathcal{I}) = \sum_{\theta_i \in F_{\theta_p}(\mathcal{I})} \xi_{pi}.
\end{equation}
Intuitively, in order to minimize the total penalty ($\rho_{\theta_p}(\mathcal{I})$), one has to select a subset $\mathcal{I} \subseteq \mathcal{D}$ that ensures that the number of hypotheses which are observationally equivalent to the true hypothesis $\theta_p$, i.e., $|F_{\theta_p}(\mathcal{I})|$, is small and/or the hypotheses that are observationally equivalent to the true hypothesis have lower misclassification penalties. Effectively, this may result in a lower penalty associated with misclassifying the true hypothesis. We now formalize the Modified Robust Minimum Penalty Information Set Selection (M-RMPIS) Problem based on this surrogate metric as follows. 

\begin{Problem}[M-RMPIS]
\label{prob:mmpis}
    Consider a set of possible states of the world $\Theta=\left\{\theta_1, \ldots, \theta_m\right\}$; a set $\mathcal{D}$ of information sources; a row-stochastic penalty matrix $ \Xi = [\xi_{ij}] \in \mathbb{R}^{m \times m}$ ; a selection budget $K \in \mathbb{R}_{\geq0}$ and an attack budget $A \in \mathbb{R}_{\ge 0}$. The M-RMPIS Problem is to find a set of selected information sources $\mathcal{I} \subseteq \mathcal{D}$ that solves
\begin{equation}
\label{eq:mmpis}
\begin{aligned}
 \min _{\mathcal{I} \subseteq \mathcal{D}}  \max _{\mathcal{I} \subseteq \mathcal{I}'} & \hspace{5pt} \displaystyle \sum_{\theta_p \in \Theta} \rho_{\theta_p}(\mathcal{I} \setminus \mathcal{I}') ;
  \hspace{2pt} \text { s.t. } |\mathcal{I}|\leq K; |\mathcal{I}'|\leq A.
\end{aligned}
\end{equation}
\end{Problem}
Consider the following optimization problem: 
\begin{equation}
\label{eq:rmpis_max}
\begin{aligned}
 \max _{\mathcal{I} \subseteq \mathcal{D}} \min _{\mathcal{I} \subseteq \mathcal{I}'} & \hspace{5pt} \displaystyle \sum_{\theta_p \in \Theta} \left(1- \rho_{\theta_p}(\mathcal{I} \setminus \mathcal{I}') \right);
  \hspace{2pt}\\ \textit { s.t. } & |\mathcal{I}|  \leq K, |\mathcal{I}'|\leq A.
\end{aligned}
\end{equation}
It is easy to verify that the problem defined in (\ref{eq:rmpis_max}) is equivalent to the problem defined in (\ref{eq:mmpis}), i.e., the information set $\mathcal{I} \subseteq \mathcal{D}$ that optimizes the problem in Equation (\ref{eq:rmpis_max}) is also the optimal solution to the Problem \ref{prob:mmpis}. We denote $g_{\theta_p}= 1 -\rho_{\theta_p}(\mathcal{I})$ and the objective function by $\Gamma(\mathcal{I} \setminus \mathcal{I}') = \sum_{\theta_p \in \Theta}g_{\theta_p}(\mathcal{I} \setminus \mathcal{I}')$.

\subsection{Robust Submodular Information Selection}
In this section, we establish the submodularity and curvature properties of the objective function for the M-RMPIS problem, which we will use for efficient approximation. Specifically, we adapt the greedy algorithm presented in \cite{tzoumas2018resilient} for robust maximization of submodular functions (see Remark \ref{remark:robust_submod}). We characterize its performance guarantees for the robust information set selection problem under the surrogate metric. 

\begin{algorithm}[htpb!]
    \caption{Robust Greedy Algorithm for M-RMPIS \label{algorithm:alg2}}
    \begin{algorithmic}[1]
      \Require  $\mathcal{D}$, $K$, $A$
       \Ensure Set $\mathcal{I} \subseteq \mathcal{D}$ such that $|\mathcal{I}| \leq K$
            \State $\mathcal{A}_1 \leftarrow \emptyset ; \quad \mathcal{R}_1 \leftarrow \emptyset ; \quad \mathcal{A}_2 \leftarrow \emptyset ; \quad \mathcal{R}_2 \leftarrow \emptyset ;$
            \While {$\mathcal{R}_1 \neq \mathcal{D}$}
            \State $x \in \arg \max _{y \in \mathcal{D} \backslash \mathcal{R}_1} \Gamma(y)$;
             \If {$|\mathcal{A}_1 \cup\{x\}| \le A$}
            \State $\mathcal{A}_1 \leftarrow \mathcal{A}_1 \cup\{x\} ;$
            \EndIf
            \State $\mathcal{R}_1 \leftarrow \mathcal{R}_1 \cup\{x\} ;$
            \EndWhile 
            \While {$\mathcal{R}_2 \neq \mathcal{D}\setminus \mathcal{A}_1$}
            \State $x \in \arg \max _{y \in \mathcal{D} \backslash (\mathcal{A}_1 \cup \mathcal{R}_2)} \Gamma(\mathcal{A}_2 \cup \{y\})$;
             \If {$|\mathcal{A}_1 \cup \mathcal{A}_2 \cup \{x\}| \le K$}
            \State $\mathcal{A}_2 \leftarrow \mathcal{A}_2 \cup\{x\} ;$
            \EndIf
            \State $\mathcal{R}_2 \leftarrow \mathcal{R}_2 \cup\{x\} ;$
            \EndWhile \\
    \Return $\mathcal{I} = \mathcal{A}_1 \cup \mathcal{A}_2$ 
    \end{algorithmic}
\end{algorithm}

We present the greedy approximation algorithm in Algorithm \ref{algorithm:alg2}. In Algorithm \ref{algorithm:alg2}, the set $\mathcal{A}_1$ approximates worst-case set removal from $\mathcal{I}$. The set $\mathcal{A}_2$ is such that the set $\mathcal{A}_1 \cup \mathcal{A}_2$ approximates the optimal solution to the M-RMPIS problem. Assuming that $\mathcal{A}_1$ is the set that will be removed from $\mathcal{I}$, Algorithm \ref{algorithm:alg2} needs to select a set of elements $\mathcal{A}_2$ to complete the construction of $\mathcal{I}$ such that $|\mathcal{I}| \le K$. The sets $\mathcal{R}_1$ and $\mathcal{R}_2$ aid in the construction of $\mathcal{A}_1$ and $\mathcal{A}_2$, which keep track of elements that are remaining to be checked and whether they could be added or not. We refer interested readers to Section 3 of \cite{tzoumas2018resilient} for more detailed discussions.

We will now proceed to establish the performance guarantees for Algorithm \ref{algorithm:alg2}. First, we have the following result, establishing the submodularity property of $g_{\theta_p}(\cdot)$. 

\begin{lemma}
    \label{lma:surr}
    The function $g_{\theta_p}(\mathcal{I}):2^{\mathcal{D}}\to \mathbb{R}$ is submodular for all $\theta_p \in \Theta$.
\end{lemma}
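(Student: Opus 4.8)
The plan is to verify submodularity directly from the definition, using the key structural fact that the observationally equivalent set is monotone non-increasing in the information set. Since $g_{\theta_p}(\mathcal{I}) = 1 - \rho_{\theta_p}(\mathcal{I})$, the additive constant is irrelevant and it suffices to show that $-\rho_{\theta_p}$ is submodular, equivalently that $\rho_{\theta_p}$ is \emph{supermodular}: for all $X \subseteq Y \subseteq \mathcal{D}$ and $e \in \mathcal{D} \setminus Y$,
\begin{equation*}
\rho_{\theta_p}(X) - \rho_{\theta_p}(X \cup \{e\}) \ge \rho_{\theta_p}(Y) - \rho_{\theta_p}(Y \cup \{e\}).
\end{equation*}
The left- and right-hand sides are exactly the marginal gains $g_{\theta_p}(X \cup \{e\}) - g_{\theta_p}(X)$ and $g_{\theta_p}(Y \cup \{e\}) - g_{\theta_p}(Y)$, so this inequality is precisely the submodularity condition for $g_{\theta_p}$.

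First I would compute the marginal reduction of $\rho_{\theta_p}$ when adding a single source. Using the intersection property (\ref{eq: obs_eq_intersect}), we have $F_{\theta_p}(X \cup \{e\}) = F_{\theta_p}(X) \cap F_{\theta_p}(e)$, so the set of hypotheses dropped from the observationally equivalent set is $F_{\theta_p}(X) \setminus F_{\theta_p}(X \cup \{e\}) = F_{\theta_p}(X) \setminus F_{\theta_p}(e)$. Because $\rho_{\theta_p}$ simply sums $\xi_{pi}$ over $F_{\theta_p}(\cdot)$, this yields the clean expression
\begin{equation*}
\rho_{\theta_p}(X) - \rho_{\theta_p}(X \cup \{e\}) = \sum_{\theta_i \in F_{\theta_p}(X) \setminus F_{\theta_p}(e)} \xi_{pi},
\end{equation*}
and the analogous identity holds for $Y$.

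Next I would invoke monotonicity of $F_{\theta_p}$. Since $X \subseteq Y$, the intersection characterization (\ref{eq: obs_eq_intersect}) gives $F_{\theta_p}(Y) \subseteq F_{\theta_p}(X)$, and hence $F_{\theta_p}(Y) \setminus F_{\theta_p}(e) \subseteq F_{\theta_p}(X) \setminus F_{\theta_p}(e)$. Because every penalty $\xi_{pi} \ge 0$, summing $\xi_{pi}$ over the smaller index set yields a smaller value, which immediately establishes the required supermodular inequality for $\rho_{\theta_p}$ and therefore the submodularity of $g_{\theta_p}$.

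I do not anticipate a serious obstacle here: the result follows once the marginal gain is rewritten as a sum of nonnegative penalties over a set difference that is inclusion-monotone in the conditioning set. The one step requiring care is the algebraic simplification $F_{\theta_p}(X) \setminus F_{\theta_p}(X \cup \{e\}) = F_{\theta_p}(X) \setminus F_{\theta_p}(e)$, which relies essentially on the intersection structure of the observationally equivalent sets rather than on any property of the penalties; nonnegativity of the $\xi_{pi}$ (guaranteed by $0 \le \xi_{ij} \le 1$) is what converts the set inclusion into the numerical inequality. Note that, unlike Lemma \ref{lma:approx_submod}, this argument does not require Assumption 2, so submodularity holds for arbitrary nonnegative penalty matrices.
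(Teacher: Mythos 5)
Your proof is correct and follows essentially the same route as the paper's: both arguments reduce the marginal gain of adding a source $e$ to the identity $F_{\theta_p}(X)\setminus F_{\theta_p}(X\cup\{e\}) = F_{\theta_p}(X)\setminus F_{\theta_p}(e)$ via the intersection property, then use monotonicity of $F_{\theta_p}$ under set inclusion and nonnegativity of the penalties to compare the two marginal gains. Your framing via supermodularity of $\rho_{\theta_p}$ is only a cosmetic repackaging of the paper's direct computation with $g_{\theta_p}$, and your closing observation that Assumption 2 is never needed is consistent with the paper's proof as well.
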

\begin{proof}
    Recall that $g_{\theta_p}(\mathcal{I}) = 1 - \sum_{\theta_i \in F_{\theta_p}(\mathcal{I})} \xi_{pi}.$ Consider any $\mathcal{I}_1 \subseteq \mathcal{I}_2 \subseteq\mathcal{D}$ and any $j \in \mathcal{D} \backslash \mathcal{I}_2$. We have the following:
{\allowdisplaybreaks
\begin{align*}
 & g_{\theta_p}\left(\mathcal{I}_1 \cup\{j\}\right) - g_{\theta_p}\left(\mathcal{I}_1\right)\\ 
= & \sum_{\theta_i \in F_{\theta_p}\left(\mathcal{I}_1\right)} \xi_{pi} - \sum_{\theta_i \in F_{\theta_p}\left(\mathcal{I}_1 \cup\{j\}\right)}  \xi_{pi}  \\
= & \sum_{\theta_i \in F_{\theta_p}\left(\mathcal{I}_1\right)} \xi_{pi} -\sum_{\theta_i \in F_{\theta_p}\left(\mathcal{I}_1\right) \cap F_{\theta_p}(j)} \xi_{pi} \\
= & \sum_{\theta_i \in  F_{\theta_p}\left(\mathcal{I}_1\right) \backslash \left(F_{\theta_p}\left(\mathcal{I}_1\right) \cap F_{\theta_p}(j)\right)} \xi_{pi}\\
=&\sum_{\theta_i \in F_{\theta_p}(\mathcal{I}_1) \backslash F_{\theta_p}\left(j\right)} \xi_{pi} .
\end{align*}
}
Note that the above arguments follow from  De Morgan's laws. Similarly, we  have
\begin{equation*}
g_{\theta_p}\left(\mathcal{I}_2 \cup\{j\}\right)- g_{\theta_p}\left(\mathcal{I}_2\right)=\sum_{\theta_i \in F_{\theta_p}(\mathcal{I}_2) \backslash F_{\theta_p}\left(j\right)} \xi_{pi} .
\end{equation*}
Since $\mathcal{I}_1 \subseteq \mathcal{I}_2$, we have $F_{\theta_p}(\mathcal{I}_2) \backslash F_{\theta_p}\left(j\right) \subseteq F_{\theta_p}(\mathcal{I}_1) \backslash F_{\theta_p}\left(j\right)$. Thus,
\begin{equation*}
g_{\theta_p}\left(\mathcal{I}_1 \cup\{j\}\right) - g_{\theta_p}\left(\mathcal{I}_1\right) \geq g_{\theta_p}\left(\mathcal{I}_2 \cup\{j\}\right)-g_{\theta_p}\left(\mathcal{I}_2\right) .
\end{equation*}
Since the above arguments hold for all $\theta_p \in \Theta$, the function $g_{\theta_p}(\cdot)$ is submodular for all $\theta_p \in \Theta$.
\hfill   \end{proof}
\begin{definition}[Curvature of Submodular Functions]
\label{def:curv}
    Consider a finite set $\Omega$, and a non-decreasing submodular set function $g:2^\Omega \mapsto\mathbb{R}$ such that (without loss of generality) for any element $e \in \Omega$, it is  $g(e)\neq 0$.  Then, the curvature of $g$ is defined as follows: \begin{equation}\label{eq:curvature}
c_g =  1-\min_{e\in\Omega}\frac{g(\Omega)-g(\Omega\setminus\{e\})}{g(e)}.
\end{equation}
\end{definition}
Definition \ref{def:curv} implies that for any non-decreasing submodular set function $g:2^{\Omega} \to \mathbb{R}$, it holds that $0 \le c_g \le 1$. The value $c_g$ measures how far $g$ is from modularity. If $c_g = 0$, then the function is modular. If $c_g=1$, then there exists an element $e \in \Omega$ such that $g(\Omega) = g(\Omega \setminus \{e\})$, i.e., in the presence of $\Omega \setminus \{e\}$, $e$ loses all its contribution. Furthermore, the generalized curvature $\alpha$ (defined in \cite{vondrak1978submodularity}) reduces to $c_g$ for submodular functions. 

We now have the following result, which characterizes the curvature of $g_{\theta_p}(\cdot)$ and the objective function $\Gamma(\cdot)$ of the M-RMPIS problem.

\begin{lemma}
\label{lma:}
Let $c_{gp}$ denote the curvature of $g_{\theta_p}:2^{\mathcal{D}} \to \mathbb{R}$ and let $c_{\Gamma}$ denote the curvature of the function $\Gamma(\mathcal{I}) = \sum_{\theta_p \in \Theta} g_{\theta_p}(\mathcal{I})$. We have
  \begin{equation}
        c_{\Gamma} = \max_{\theta_p \in \Theta} c_{gp},
        \end{equation}
        where 
    \begin{equation}
    \label{eq:cgp}
     c_{gp} = 
    \begin{cases}
      \quad  \quad \quad \quad 1  & \text{if for some  $ i \in \mathcal{D}$} \\ & g_{\theta_p}(\mathcal{D}) = g_{\theta_p}(\mathcal{D}\setminus i), \\
      \displaystyle 1- \frac{\min_{\theta_i, \theta_j \in \Theta}|\xi_{pi} - \xi_{pj}|}{\max_{\theta_j \in \Theta} \xi_{pj}}& \text{otherwise}. 
    \end{cases}
    \end{equation}

\end{lemma}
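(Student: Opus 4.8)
The plan is to prove the two claims separately: first the formula \eqref{eq:cgp} for the curvature $c_{gp}$ of each individual submodular function $g_{\theta_p}(\cdot)$, and then the aggregation identity $c_{\Gamma} = \max_{\theta_p} c_{gp}$. For the aggregation step, I expect the argument to mirror the inverse-generalized-curvature computation in the proof of Proposition \ref{lma:overall_ratios}: since $\Gamma = \sum_{\theta_p} g_{\theta_p}$ and curvature of a submodular function is (by Definition \ref{def:curv}) governed by the ratio of a worst-case marginal loss $g(\Omega) - g(\Omega \setminus \{e\})$ to the singleton value $g(e)$, summing the inequalities $g_{\theta_p}(\Omega) - g_{\theta_p}(\Omega \setminus \{e\}) \ge (1 - c_{gp}) g_{\theta_p}(e)$ over $\theta_p$ and factoring out the smallest $(1 - c_{gp})$ (equivalently the largest $c_{gp}$) yields $c_{\Gamma} = \max_{\theta_p} c_{gp}$. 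I would present this compactly, noting the parallel with the earlier proposition rather than re-deriving every line.

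The main work is the per-hypothesis formula. First I would handle the degenerate case: if there exists $i \in \mathcal{D}$ with $g_{\theta_p}(\mathcal{D}) = g_{\theta_p}(\mathcal{D} \setminus i)$, then the minimizing term in \eqref{eq:curvature} is zero, forcing $c_{gp} = 1$ directly from the definition. For the nondegenerate case I would expand the marginal gains using the submodularity computation already established in Lemma \ref{lma:surr}, namely $g_{\theta_p}(\Omega) - g_{\theta_p}(\Omega \setminus \{e\}) = -\big(\sum_{\theta_i \in F_{\theta_p}(\Omega)} \xi_{pi} - \sum_{\theta_i \in F_{\theta_p}(\Omega \setminus \{e\})} \xi_{pi}\big)$; since removing $e$ can only enlarge the observationally-equivalent set, this marginal loss equals $\sum_{\theta_i \in F_{\theta_p}(\Omega \setminus \{e\}) \setminus F_{\theta_p}(\Omega)} \xi_{pi}$, and is lower-bounded by a single penalty difference $\min_{\theta_i \ne \theta_j} |\xi_{pi} - \xi_{pj}|$. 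For the singleton term $g_{\theta_p}(e) = 1 - \sum_{\theta_i \in F_{\theta_p}(e)} \xi_{pi}$ I would upper-bound it so that the full ratio $\frac{g_{\theta_p}(\Omega) - g_{\theta_p}(\Omega \setminus \{e\})}{g_{\theta_p}(e)}$ is lower-bounded by $\frac{\min_{\theta_i,\theta_j}|\xi_{pi} - \xi_{pj}|}{\max_{\theta_j} \xi_{pj}}$, which upon substitution into \eqref{eq:curvature} gives the stated ``otherwise'' expression.

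The step I expect to be the main obstacle is pinning down the denominator bound so that the claimed closed form $\max_{\theta_j \in \Theta} \xi_{pj}$ emerges exactly rather than as a loose surrogate. The definition of $c_{gp}$ takes a minimum over $e$ of the ratio, so I must argue that the worst-case element $e$ realizes both a numerator near $\underline{\xi}_p$ and a singleton value $g_{\theta_p}(e)$ whose reciprocal pairs with $\max_{\theta_j} \xi_{pj}$; this requires care because the singleton value involves a sum over $F_{\theta_p}(e)$, not a single penalty, and the normalization assumption $g_{\theta_p}(e) \neq 0$ from Definition \ref{def:curv} must be invoked. As in the proof of Lemma \ref{lma:ratios}, I would note that separately bounding numerator and denominator yields a valid but possibly loose general-instance bound, and that the clean form in \eqref{eq:cgp} is what holds for a generic R-MPIS instance. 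I would also verify consistency: since each $g_{\theta_p}$ is submodular by Lemma \ref{lma:surr}, Definition \ref{def:curv} applies and guarantees $c_{gp} \in [0,1]$, so the formula is well-posed.
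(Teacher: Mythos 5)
Your skeleton matches the paper's proof: the same degenerate case (some $e$ with $g_{\theta_p}(\mathcal{D})=g_{\theta_p}(\mathcal{D}\setminus\{e\})$ forces $c_{gp}=1$), the same style of numerator lower bound, and the same aggregation argument for $c_{\Gamma}=\max_{\theta_p}c_{gp}$ by summing per-hypothesis inequalities as in Proposition \ref{lma:overall_ratios}. However, the step you yourself flag as ``the main obstacle'' is a genuine gap, and it cannot be closed as you propose: the inequality you need, $g_{\theta_p}(e)\le \max_{\theta_j\in\Theta}\xi_{pj}$, is false for the total-penalty surrogate. Since $\Xi$ is row-stochastic and $\xi_{pp}=0$, we have $g_{\theta_p}(e)=1-\sum_{\theta_i\in F_{\theta_p}(e)}\xi_{pi}=\sum_{\theta_i\notin F_{\theta_p}(e)}\xi_{pi}$; for any source with $F_{\theta_p}(e)=\{\theta_p\}$ this equals $1$, which strictly exceeds $\max_{\theta_j\in\Theta}\xi_{pj}$. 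The only generic bound the sum form admits is $g_{\theta_p}(e)\le 1$, which yields $c_{gp}\le 1-\min_{\theta_i\ne\theta_j}|\xi_{pi}-\xi_{pj}|$ --- a different constant than the one claimed in \eqref{eq:cgp}. So your plan, carried out faithfully to the definition \eqref{eq:tot_pen}, does not terminate in the stated formula. (Your numerator treatment is fine, and in fact on firmer ground than the paper's: in the non-degenerate case the set $F_{\theta_p}(\mathcal{D}\setminus\{e\})\setminus F_{\theta_p}(\mathcal{D})$ is non-empty and contains only $\theta_i\ne\theta_p$, so the sum is at least $\min_{\theta_i\ne\theta_p}\xi_{pi}\ge\min_{\theta_i\ne\theta_j}|\xi_{pi}-\xi_{pj}|$ using $\xi_{pp}=0$.)

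It is worth understanding how the paper itself gets \eqref{eq:cgp}, because it explains why you got stuck. The paper's proof never uses the sum form of $g_{\theta_p}$: it writes the curvature ratio with numerator $\max_{\theta_j\in F_{\theta_p}(\mathcal{D}\setminus\{e\})}\xi_{pj}-\max_{\theta_j\in F_{\theta_p}(\mathcal{D})}\xi_{pj}$ and denominator $\max_{\theta_j\in F_{\theta_p}(e)}\xi_{pj}$, then bounds these separately by $\min_{\theta_i\ne\theta_j}|\xi_{pi}-\xi_{pj}|$ and $\max_{\theta_j\in\Theta}\xi_{pj}$ respectively. But that denominator is not $g_{\theta_p}(e)$, and that numerator is not $g_{\theta_p}(\mathcal{D})-g_{\theta_p}(\mathcal{D}\setminus\{e\})$; those are the expressions appropriate to the max-penalty metric $f_{\theta_p}$ of the R-MPIS objective, not to the surrogate $g_{\theta_p}$ defined in \eqref{eq:tot_pen}. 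So you have not missed a trick --- you have located a real inconsistency between the lemma's stated closed form and the definition of $g_{\theta_p}$: a derivation faithful to \eqref{eq:curvature} and \eqref{eq:tot_pen} gives the bound $c_{gp}\le 1-\min_{\theta_i\ne\theta_j}|\xi_{pi}-\xi_{pj}|$ rather than \eqref{eq:cgp}, and no choice of denominator bound repairs this without changing either the metric or the statement.
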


\begin{proof}
    We first begin by characterizing $c_{gp}$. From \eqref{eq:curvature}, we have 
    \begin{equation}
        c_{gp} = 1 - \min_{e \in \mathcal{D}}\frac{\max_{\theta_j \in F_{\theta_p}(\mathcal{D} \setminus \{e\})} \xi_{pj} - \max_{\theta_j \in F_{\theta_p}(\mathcal{D})} \xi_{pj} }{\max_{\theta_j \in F_{\theta_p}(e)} \xi_{pj}}
    \end{equation} 
    Suppose that there exists an information source $e \in \mathcal{D}$ such that $F_{\theta_p}(\mathcal{D} \setminus \{e\}) = F_{\theta_p}(\mathcal{D})$ for some $\theta_p \in \Theta$, we have $\max_{\theta_j \in F_{\theta_p}(\mathcal{D} \setminus \{e\})} \xi_{pj} = \max_{\theta_j \in F_{\theta_p}(\mathcal{D})} \xi_{pj}$. It follows that $g_{\theta_p}(\mathcal{D}) = g_{\theta_p}(\mathcal{D}\setminus \{e\})$, and $c_{gp}=1$. 
    
    Now, if the above condition does not hold, we proceed to establish a bound on the minimum value of the fraction over all $e \in \mathcal{D}$. 
    \begin{equation}
    \label{eq:curv_lb}
        \max_{\theta_j \in F_{\theta_p}(\mathcal{D} \setminus \{e\})} \xi_{pj} - \max_{\theta_j \in F_{\theta_p}(\mathcal{D})} \xi_{pj} \ge \min_{\theta_i, \theta_j \in \Theta}|\xi_{pi} - \xi_{pj}|.
    \end{equation}

    \begin{equation}
     \label{eq:curv_ub}
        \max_{\theta_j \in F_{\theta_p}(e)} \xi_{pj} \le \max_{\theta_j \in \Theta} \xi_{pj}.
    \end{equation}
    Combining \eqref{eq:curv_lb} and \eqref{eq:curv_ub}, we obtain the expression for $c_{gp}$. 

    Based on similar arguments as in Proposition \ref{lma:overall_ratios} for the inverse generalized curvature $\check{\alpha}$, we have $c_{\Gamma} = \displaystyle \max_{\theta_p \in \Theta} c_{gp}$.
\hfill   \end{proof}

Define $h(K,A) = \max [1/(1+A), 1/(K-A)]$. Based on Theorem 1 in \cite{tzoumas2018resilient}, we have the following performance guarantees for Algorithm \ref{algorithm:alg2}. 

\begin{theorem}
\label{thm:surr_opt}
    Consider an instance of the M-RMPIS problem. Let $\Gamma^*$ denote the optimal objective function value. Let $\mathcal{I}$ be the information set returned by Algorithm \ref{algorithm:alg2} and let $\mathcal{I}' \subseteq \mathcal{I}$ be the set of information sources removed by the attacker (i.e., optimal attack set) from $\mathcal{I}$. Algorithm \ref{algorithm:alg2} has the following guarantees:
    \begin{equation}
        \Gamma(\mathcal{I} \setminus \mathcal{I}') \ge \frac{\max [1-c_{\Gamma}, h(K,A)]}{c_{\Gamma}} (1-e^{-c_{\Gamma}}) \hspace{2pt} \Gamma^*.
    \end{equation}
\end{theorem}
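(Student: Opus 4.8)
The plan is to recognize that the statement is an instance of the resilient submodular maximization guarantee (Theorem 1 of \cite{tzoumas2018resilient}) specialized to $\Gamma(\cdot)$, since Algorithm \ref{algorithm:alg2} is precisely the adaptation of the two-stage partitioned-greedy scheme analyzed there. To invoke that result, I first need to certify that $\Gamma$ meets its hypotheses: it must be non-negative, monotone non-decreasing, and submodular, and I must have a handle on its curvature $c_\Gamma$.

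First I would establish the three structural properties of $\Gamma$. Submodularity of each $g_{\theta_p}$ is already given by Lemma \ref{lma:surr}, and since $\Gamma = \sum_{\theta_p} g_{\theta_p}$ is a sum of submodular functions, it is itself submodular by the standard closure of submodularity under non-negative summation (mirroring the summation arguments used in Proposition \ref{lma:overall_ratios}). For non-negativity and monotonicity I would use the row-stochasticity of $\Xi$ together with the contraction property \eqref{eq: obs_eq_intersect}: because $F_{\theta_p}(\mathcal{I}) \subseteq \Theta$ and $\sum_{\theta_i \in \Theta} \xi_{pi} = 1$, we get $\rho_{\theta_p}(\mathcal{I}) \le 1$, hence $g_{\theta_p}(\mathcal{I}) = 1 - \rho_{\theta_p}(\mathcal{I}) \ge 0$; and since adding a source only intersects (shrinks) $F_{\theta_p}$, the penalty $\rho_{\theta_p}$ can only decrease, so each $g_{\theta_p}$, and therefore $\Gamma$, is monotone non-decreasing.

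Next I would pull the curvature value $c_\Gamma = \max_{\theta_p \in \Theta} c_{gp}$ directly from Lemma \ref{lma:}, after excluding the degenerate case $\Gamma(\{e\}) = 0$ that Definition \ref{def:curv} rules out (i.e. assuming every singleton source carries discriminative value for at least one hypothesis, so the curvature is well defined). With non-negativity, monotonicity, submodularity, and the curvature in hand, I would then map the budgets: the cardinality constraint $|\mathcal{I}| \le K$ and the worst-case removal of up to $A$ sources correspond to the design budget and robustness parameter in \cite{tzoumas2018resilient}. Invoking their Theorem 1 with $c_g = c_\Gamma$ and $h(K,A) = \max[1/(1+A), 1/(K-A)]$ then yields the stated bound on $\Gamma(\mathcal{I} \setminus \mathcal{I}')$ relative to $\Gamma^*$ verbatim.

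The main obstacle is not any single hard computation but the careful verification that Algorithm \ref{algorithm:alg2} coincides with the algorithm whose guarantee \cite{tzoumas2018resilient} establishes — in particular that the two \textsc{while}-loops constructing $\mathcal{A}_1$ (the greedy surrogate for the worst-case deletion) and $\mathcal{A}_2$ (the completion to budget $K$) implement the same partitioned-greedy procedure, with the stopping rules $|\mathcal{A}_1| \le A$ and $|\mathcal{A}_1 \cup \mathcal{A}_2| \le K$ matching their cardinality/robustness conventions. Once this correspondence and the structural preconditions (especially the non-degeneracy needed for $c_\Gamma$ to be well defined) are confirmed for the M-RMPIS instance, the performance bound transfers directly.
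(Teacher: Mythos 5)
Your proposal is correct and matches the paper's own route: the paper likewise proves the guarantee by establishing submodularity of each $g_{\theta_p}$ (Lemma \ref{lma:surr}), characterizing the curvature $c_{\Gamma} = \max_{\theta_p \in \Theta} c_{gp}$ (Lemma \ref{lma:}), and then invoking Theorem 1 of \cite{tzoumas2018resilient}, of which Algorithm \ref{algorithm:alg2} is a direct adaptation. Your explicit verification of non-negativity, monotonicity, and the non-degeneracy needed for the curvature to be well defined is a slightly more careful rendering of preconditions the paper leaves implicit, but it is the same argument.
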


\textbf{Note:} For a finite selection budget $K$, the value of $h(K,A)$ is always non-zero. The minimum value of $h(K,A)$ is $2/(K+2)$ (which occurs at $A = K/2$) and the maximum value is $1$. For a fixed $K$, $h(K,A)$ is decreasing with respect to $A$ in the interval $0\le  A \le K/2$ and increasing with respect to $A$ in the interval $K/2 +1 \le A < K$.  

\begin{remark}
\label{remark:robust_submod}
    In general, for many instances of the M-RMPIS problem, the curvature is $c_{\Gamma} = 1$. Then the near-optimality guarantees provided by Theorem \ref{thm:surr_opt} only depend on $h(K,A)$ and is given by $\Gamma(\mathcal{I} \setminus \mathcal{I}') \ge h(K,A) (1-e^{-1}) \hspace{2pt} \Gamma^* $. Furthermore, one can employ Algorithm \ref{algorithm:alg} to obtain an approximate solution to the M-RMPIS problem, where the theoretical guarantees presented in Theorem \ref{thm:main_thm} hold with $\gamma = 1$ (since for submodular functions, the submodularity ratio $\gamma$ satisfies $\gamma = 1$). However, Algorithm \ref{algorithm:alg2} provides stronger guarantees (as in Theorem \ref{thm:surr_opt}).
\end{remark}

\section{Empirical Evaluation}
In this section, we empirically demonstrate the performance of the proposed algorithms on several instances of the information selection problem.
\vspace{-0.3cm}
\subsection{Case Study: Robust Threat Detection in Naval Fleet Surveillance}
We consider a naval fleet surveillance system tasked with detecting and classifying potential maritime threats in a naval operation zone. The task is to classify observed objects into one of the following 10 threat categories: $\Theta=$ \{\textit{Warship, Submarine, Fishing Vessel, Cargo Ship, Tanker, Speedboat, Drone Swarm, Pirate Skiff, Floating Debris, Background Noise}\}. We refer to this as the Maritime Threat Detection (MTD) Task. Misclassifying hostile entities, such as a warship or a submarine, as benign objects (e.g., cargo ship or background noise) can result in significant security breaches, while unnecessary escalation due to false positives (e.g., classifying fishing vessels as pirate skiffs) can waste critical resources. 

\textbf{Misclassification Penalty Matrix}: The \textit{penalty matrix} \( \Xi \) quantifies the cost to pay (penalty) for misclassifying different entities. The penalty values are defined by the system designer and are shaped by factors such as resource costs, availability, and the system's capacity to tolerate misclassification events. The penalty matrix is illustrated in Figure \ref{fig1}.  Each row of the matrix is normalized to ensure consistency in scaling. 
\begin{itemize}
    \item \textit{High penalties} are assigned for misclassifying critical threats as benign, such as detecting a \textit{submarine} or \textit{drone swarm} as \textit{background noise} (e.g., \( \Xi(\textit{Warship}, \textit{Background Noise}) = 22 \)).  
    \item \textit{Moderate penalties} are assigned when benign objects such as \textit{ fishing vessels} or \textit{floating debris} are classified as threats (e.g., \( \Xi(\textit{Fishing Vessel}, \textit{Pirate Skiff}) = 15 \)).  
    \item \textit{Lower penalties} are assigned for misclassification events that occur between benign objects or between objects with similar threats, as they require similar resources to secure the facility 
    (e.g., \( \Xi(\textit{Speed Boat}, \textit{Background Noise}) = 5 \)).
\end{itemize}

\textbf{Information Sources:}  
    The system consists of \( |\mathcal{D}| = 10 \) \textit{information sources or sensor platforms} deployed in the surveillance zone. These platforms can include radar systems, sonar arrays, infrared cameras, optical cameras, and acoustic sensors mounted on ships, submarines, or unmanned aerial vehicles (UAVs). Each platform (i.e., information source) has a likelihood function conditioned on the true hypothesis, which can be completely specified by the observationally equivalent sets $F_{\theta_p}(i)$ for all $\theta_p \in \Theta$ and for all $i \in \mathcal{D}$.  
    
\textbf{Observational Equivalence:}  
    Some sensors cannot distinguish between specific objects due to limitations in observation capabilities. For example, a sonar array may confuse a \textit{submarine} with a large \textit{cargo ship} due to similar acoustic signatures. As a result, for an information set, multiple classes can be observationally equivalent. The observationally equivalent sets \( F_{\theta_p}(i) \) for each \( \theta_p \in \Theta \) and \( i \in \mathcal{D} \) capture this ambiguity.

\begin{wrapfigure}{r}{0.5\textwidth}
\begin{center}
\includegraphics[height= 7cm]{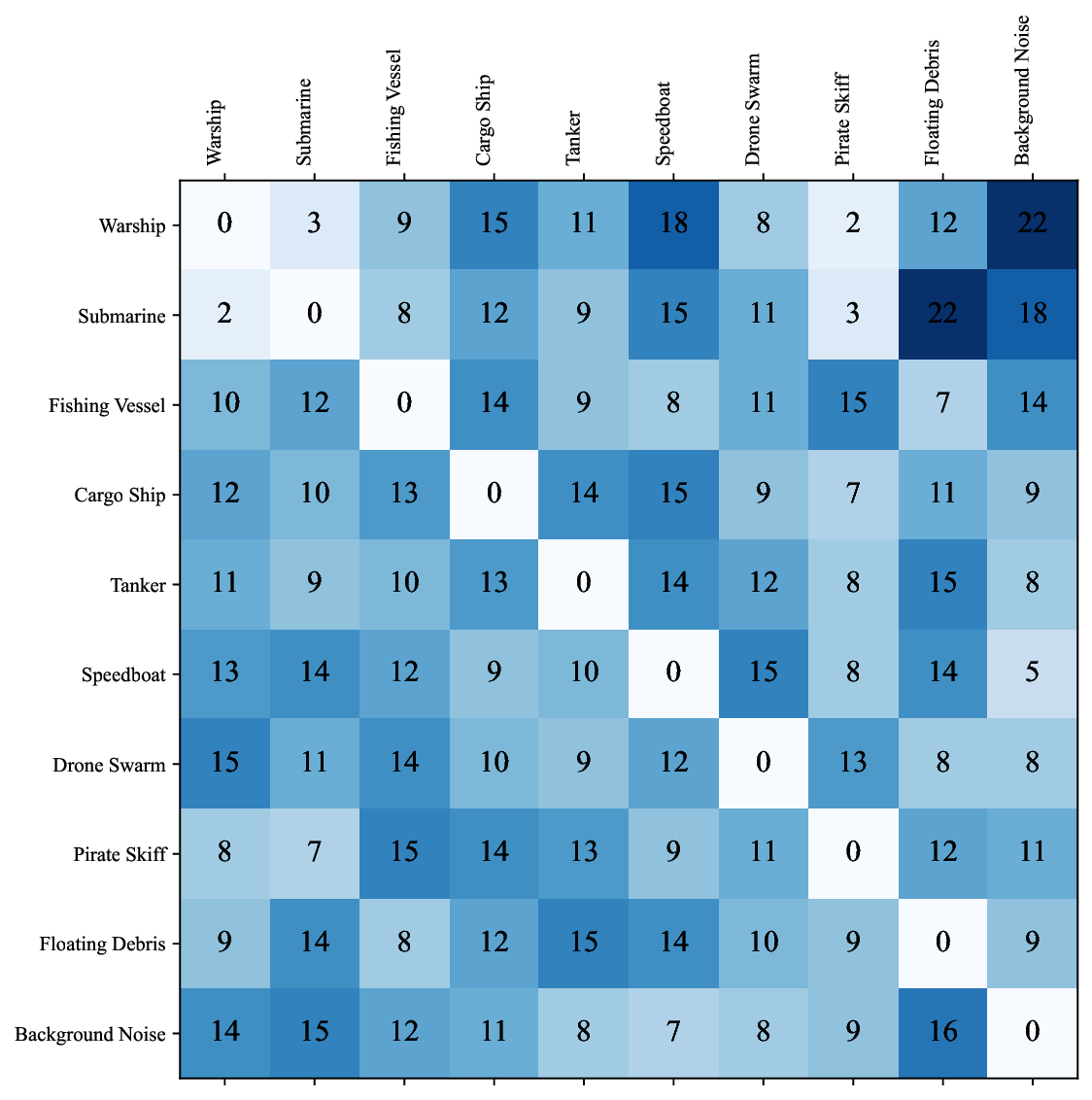}
\caption{Penalty Matrix for Maritime Threat Detection Task}
\label{fig1}      
\end{center}
\end{wrapfigure}

\begin{figure*}[!ht]
    \centering
    \begin{subfigure}{0.3\textwidth}
    \centering
        \includegraphics[width=160pt]{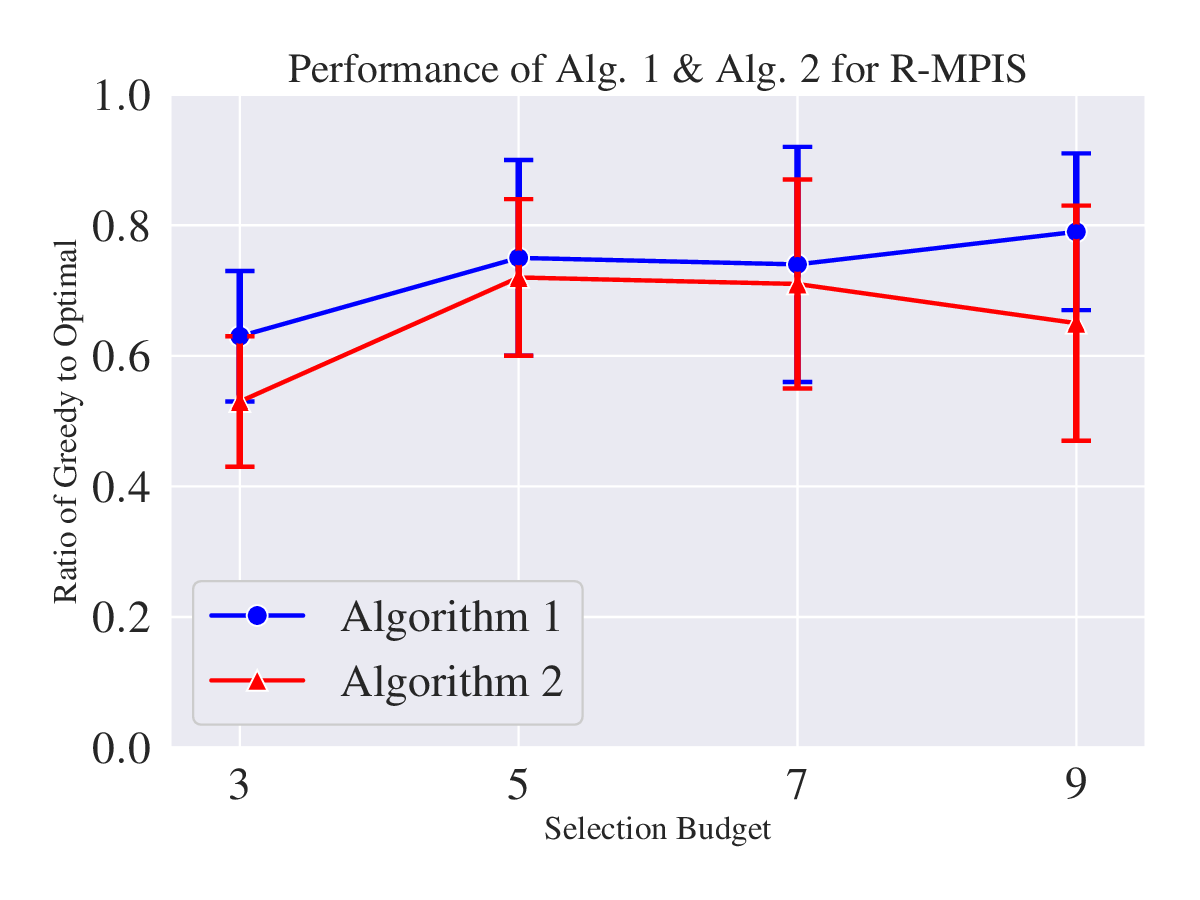}
        \caption{Ratio of greedy to optimal for R-MPIS with varying $K$ \& $A = \lceil 0.5 K \rceil$}
        \label{fig:a}
    \end{subfigure}
    \hfill
    \begin{subfigure}{0.3\textwidth}
    \centering
        \includegraphics[width=160pt]{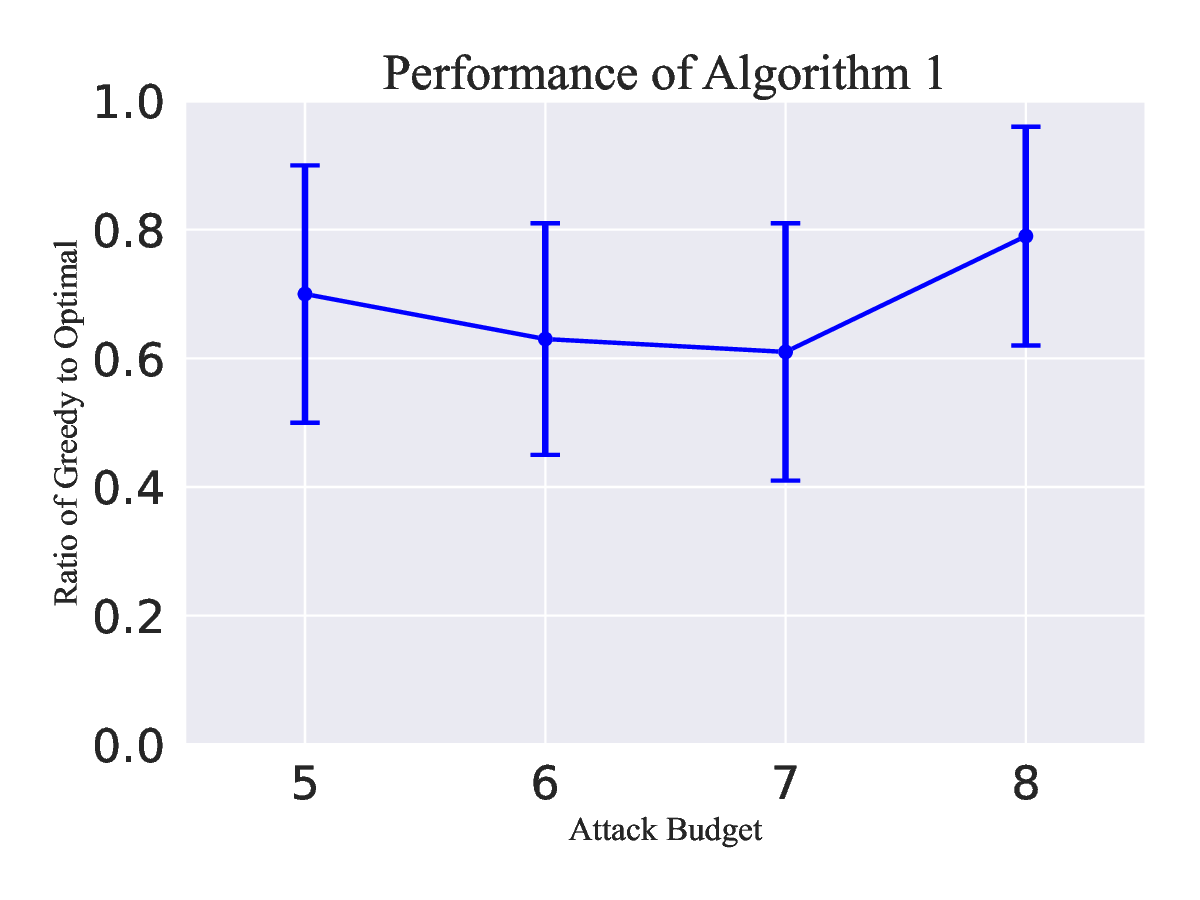}
        \caption{Ratio of greedy to optimal for R-MPIS with  $K = 9$ \& varying $A$ (Alg. \ref{algorithm:alg})}
        \label{fig:b}
    \end{subfigure}
    \hfill
    \begin{subfigure}{0.3\textwidth}
    \centering
        \includegraphics[width=160pt]{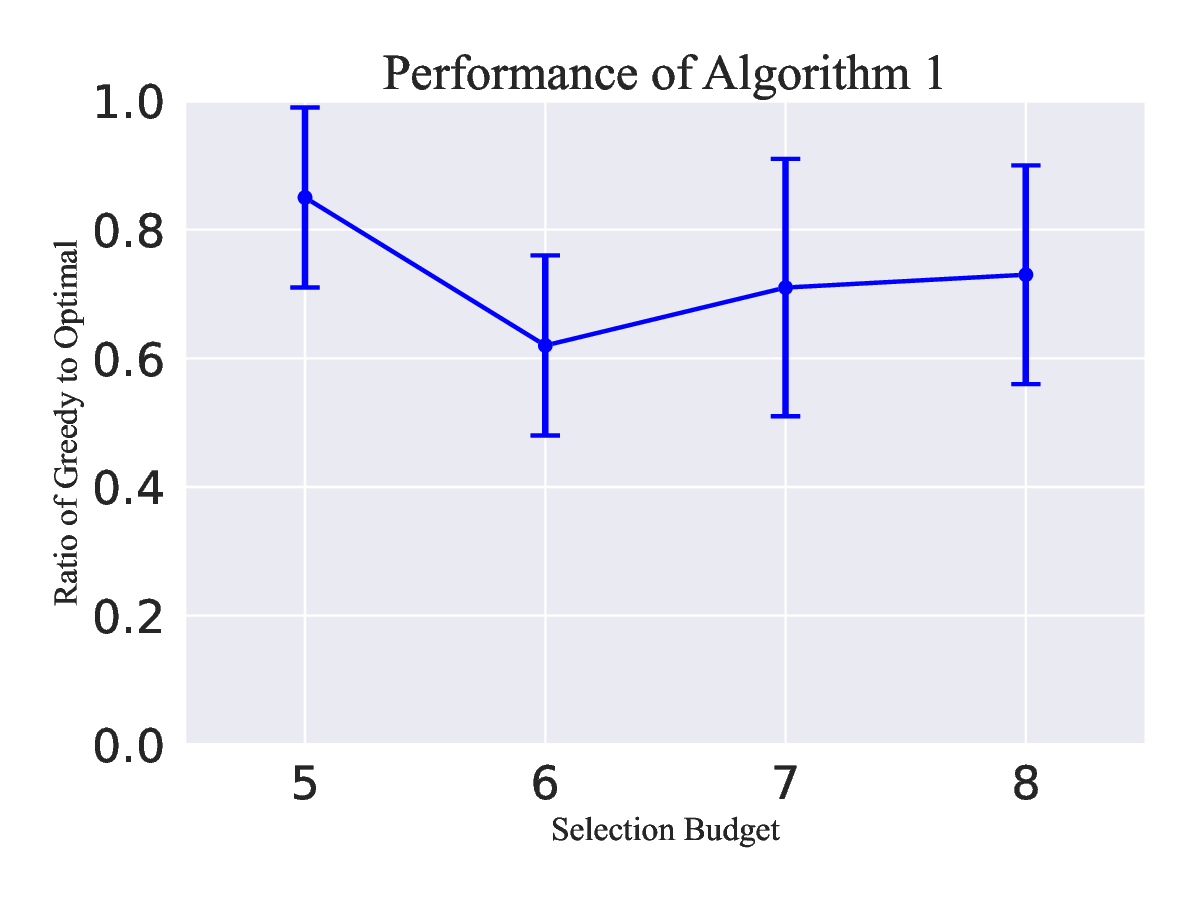}
        \caption{Ratio of greedy to optimal for R-MPIS with  $A = 3$ \& varying $K$ (Alg. \ref{algorithm:alg})}
        \label{fig:c}
    \end{subfigure}
    
    \medskip
    
 \begin{subfigure}{0.3\textwidth}
    \centering
        \includegraphics[width=160pt]{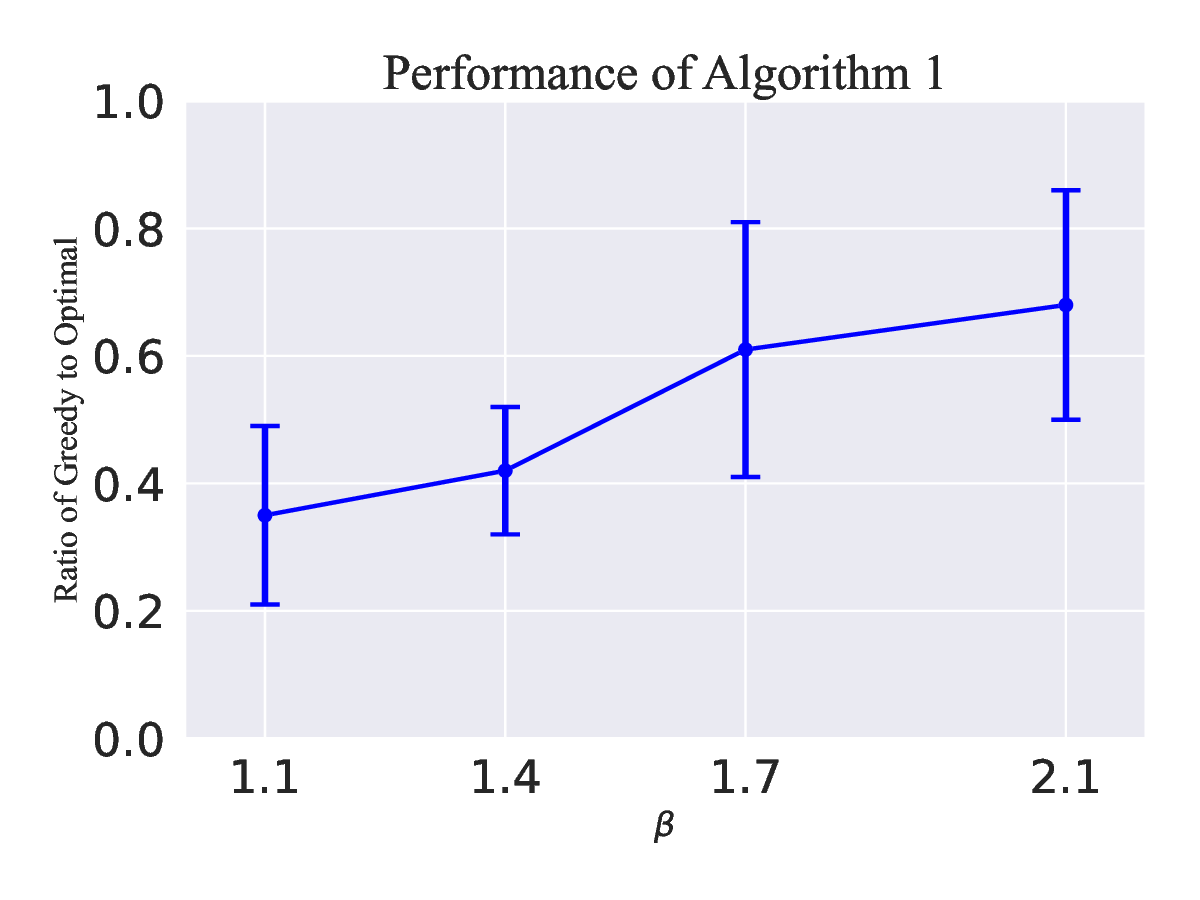}
        \caption{Ratio of greedy to optimal for R-MPIS with  $K = 9, A = 4$ \& varying $\beta$ (Alg. \ref{algorithm:alg})}
        \label{fig:d}
    \end{subfigure}
    \hfill
    \begin{subfigure}{0.3\textwidth}
    \centering
        \includegraphics[width=160pt]{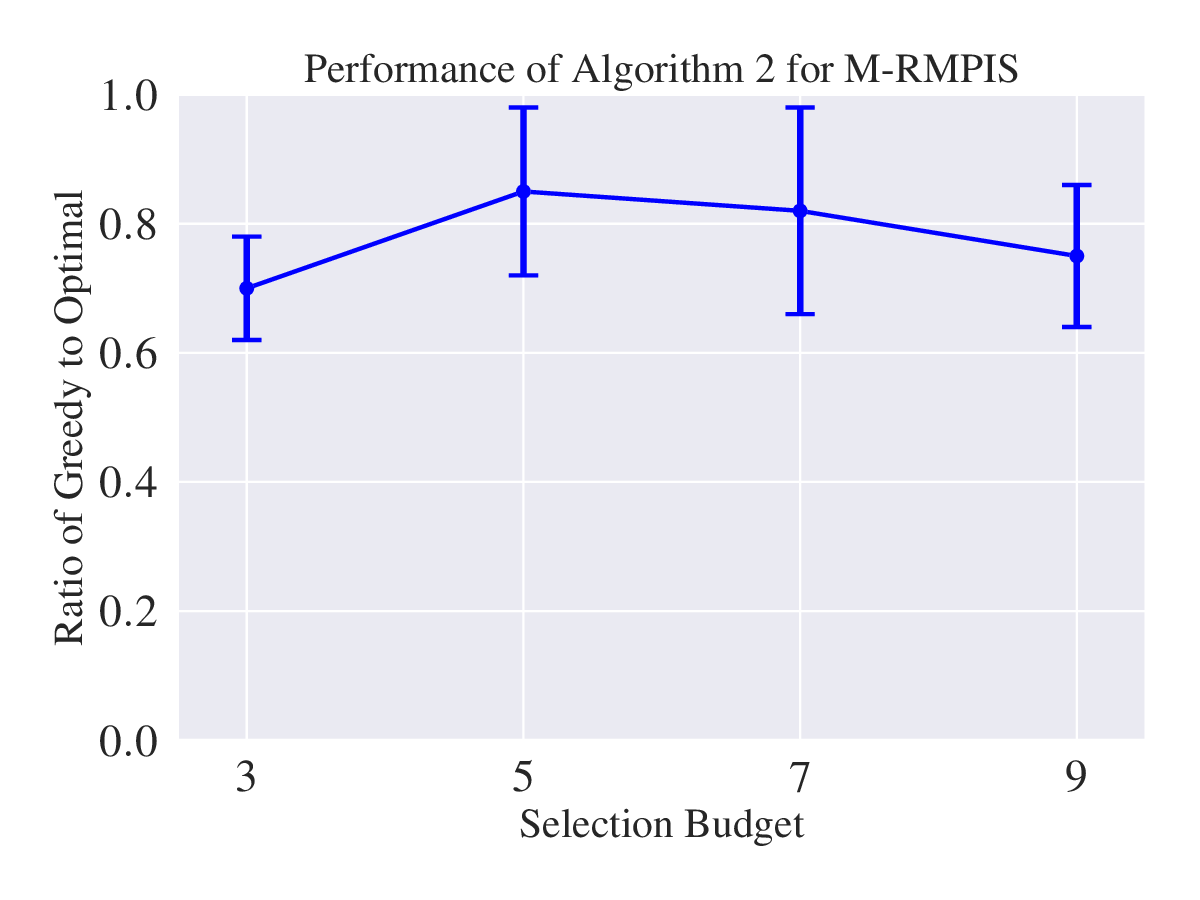}
        \caption{Ratio of greedy to optimal for M-RMPIS with varying $K$ \& $A = \lceil 0.5 K \rceil$}
        \label{fig:e}
    \end{subfigure}
    \hfill
    \begin{subfigure}{0.3\textwidth}
    \centering
        \includegraphics[width=150pt]{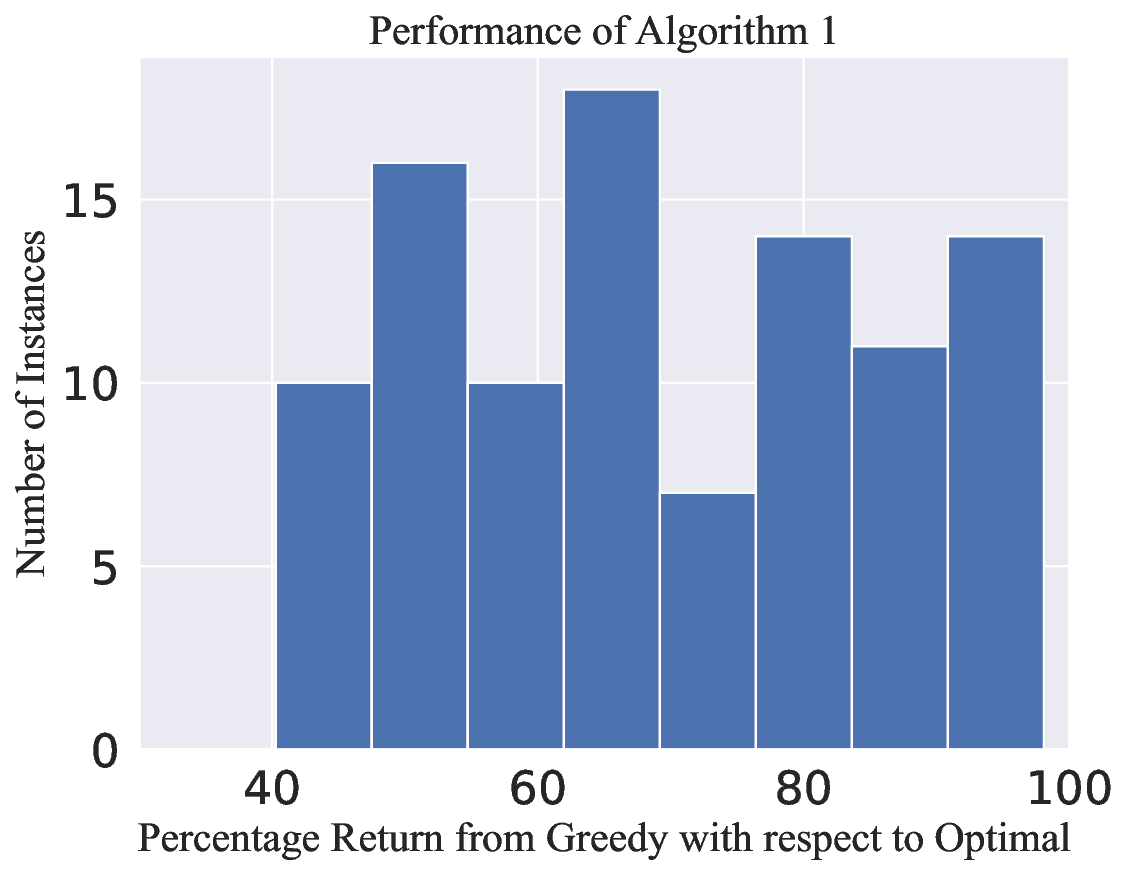}
        \caption{Evaluation of Alg. \ref{algorithm:alg} on randomly generated instances of R-MPIS}
        \label{fig:f}
    \end{subfigure}
    
    \medskip
    
        \begin{subfigure}{0.3\textwidth}
    \centering
     \includegraphics[width=150pt]{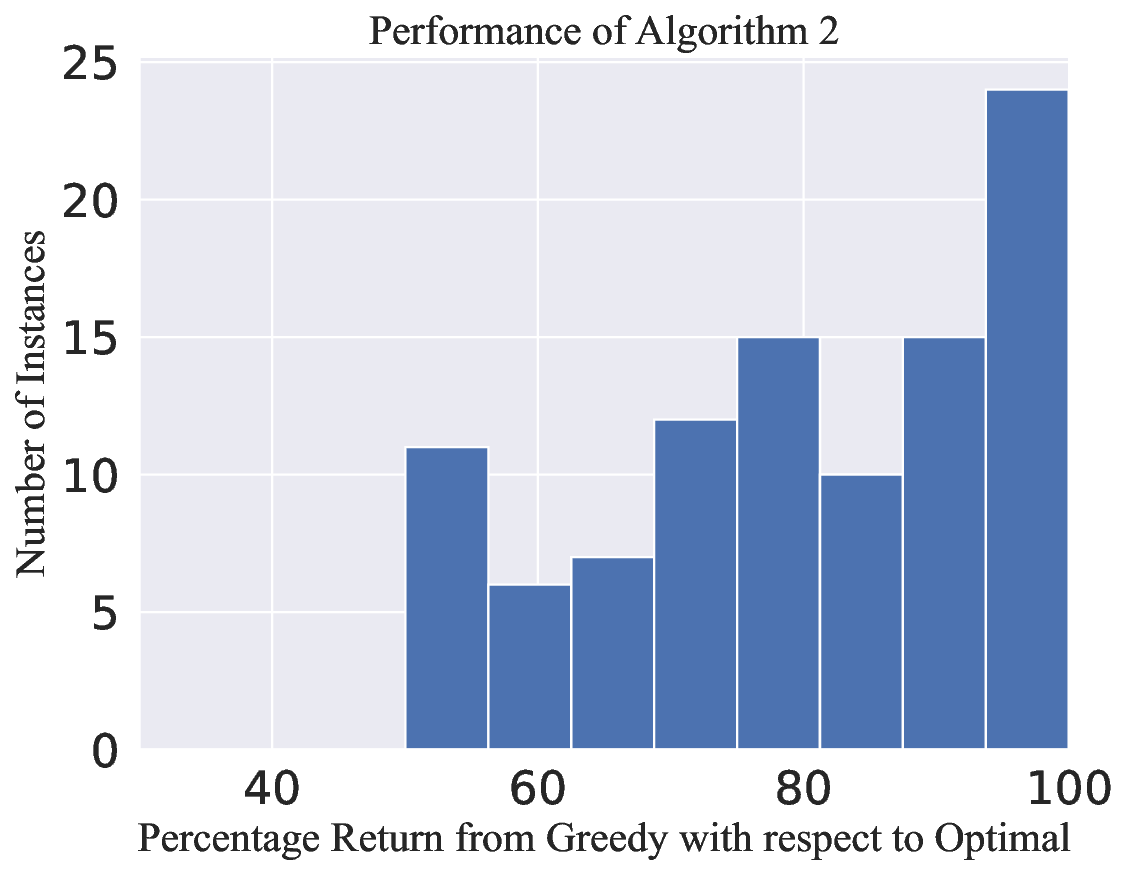}
        \caption{Evaluation of Alg. \ref{algorithm:alg2} on randomly generated instances of M-RMPIS}
        \label{fig:g}
    \end{subfigure}
    \hfill    
 \begin{subfigure}{0.3\textwidth}
    \centering
         \includegraphics[width=160pt]{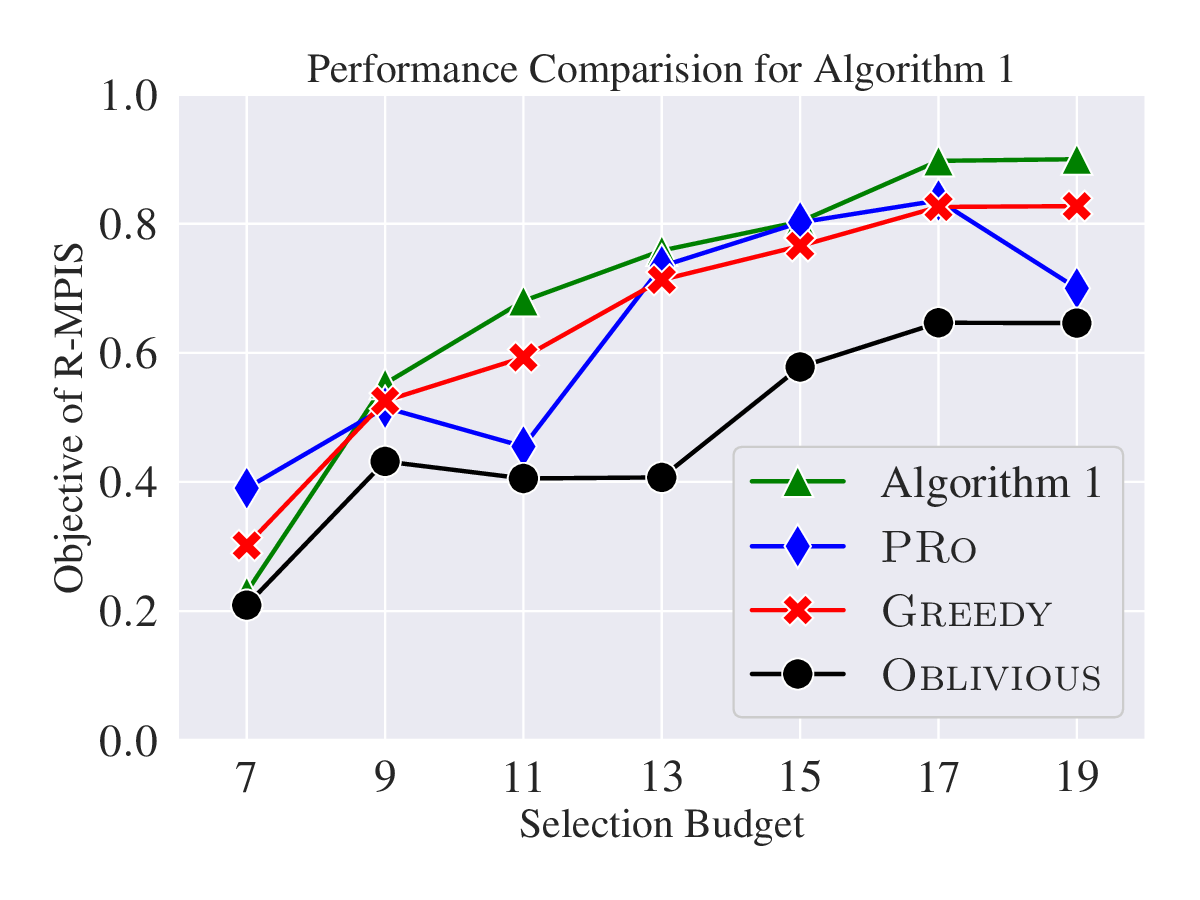}
        \caption{Performance comparison for Alg. \ref{algorithm:alg}}
        \label{fig:h}
    \end{subfigure}
    \hfill
    \begin{subfigure}{0.3\textwidth}
    \centering
        \includegraphics[width=160pt]{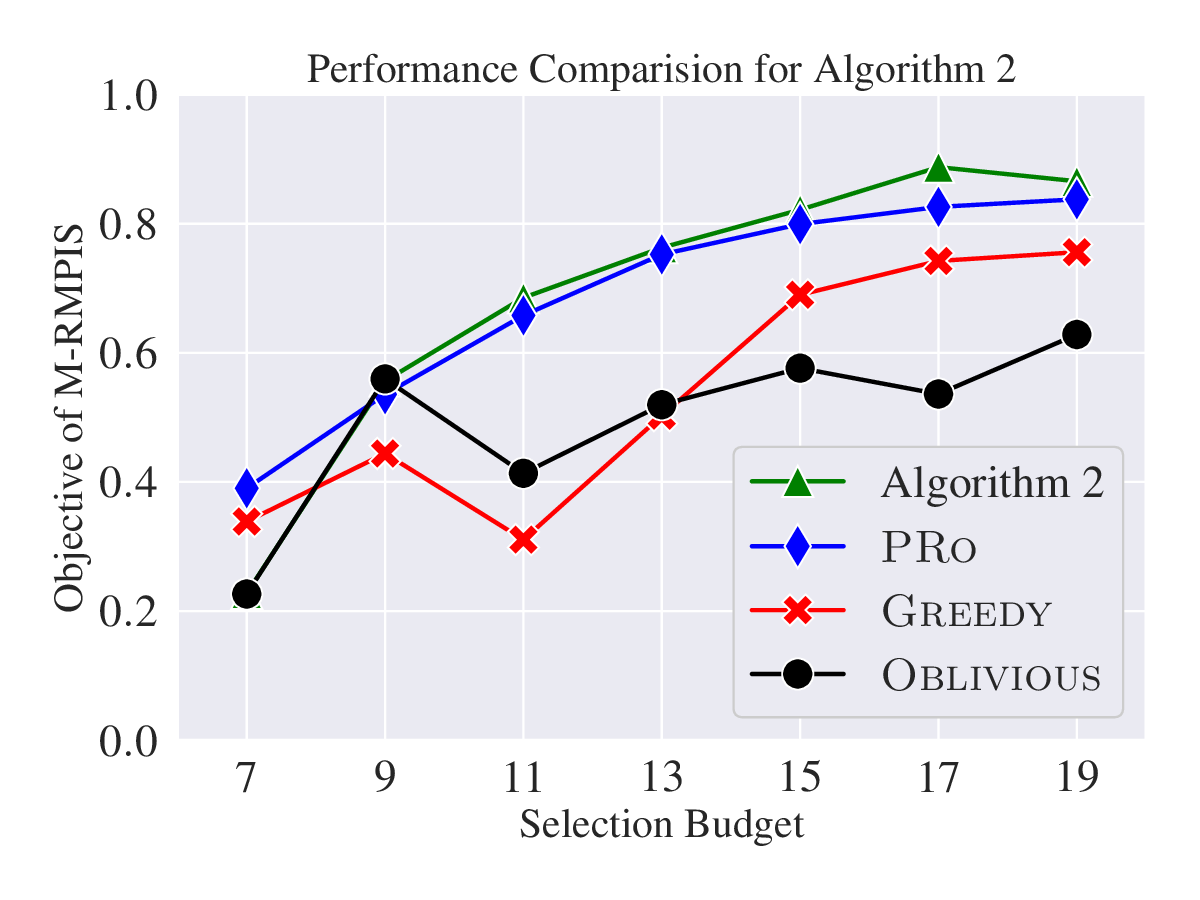}
        \caption{Performance comparison for Alg. \ref{algorithm:alg2}}
        \label{fig:i}
    \end{subfigure}
    \caption{Empirical evaluation of robust greedy algorithms (Alg. \ref{algorithm:alg} and Alg. \ref{algorithm:alg2}) on R-MPIS and M-RMPIS problem instances}
    \label{fig:all}
\end{figure*}
 For each instance of the R-MPIS problem, we randomly generate the observationally equivalent sets $F_{\theta_p}(i)$ for each $\theta_p \in \Theta$ and $i \in \mathcal{D}$. We perform the following evaluations:
 \begin{enumerate}
     \item We vary the selection budget $K \in \{3,5,7,9\}$ and set the respective attack budgets to $A = \lceil 0.5 K \rceil$.
     \item We fix the selection budget to $K=9$ and vary the attack budget $A \in \{ 5,6,7,8 \}$.
     \item We fix the attack budget to $A = 3$ and vary the selection budget $K \in \{5,6,7,8\}$.
     \item We fix the selection budget to $K=9$ and the attack budget to $A = 4$. We vary $\beta \in \{ 1.1, 1.4, 1.7, 2.1\}$.
 \end{enumerate}
 
 We run Algorithm \ref{algorithm:alg} to find the greedy information set $\mathcal{I}$ and find the optimal information set $\mathcal{I}^*$ using brute-force search. For both cases, we find the attack sets $\mathcal{I}'$ and $\mathcal{I}'^*$  by brute-force search. For each of the cases, we evaluate the algorithm for over 50 instances and plot the ratio of greedy utility to that of the optimal, i.e., $\Lambda(\mathcal{I} \setminus \mathcal{I}')/\Lambda(\mathcal{I^*}\setminus \mathcal{I}'^*)$, in Figures \ref{fig:a}, \ref{fig:b}, \ref{fig:c} and \ref{fig:d} respectively. We note that the misclassification penalties do not satisfy Assumption 2. As a result, for this instance, the performance bounds (of Algorithm \ref{algorithm:alg}) become trivial (i.e., $0$). Despite this, we observe near-optimal performance of the Algorithm \ref{algorithm:alg}. We also run Algorithm \ref{algorithm:alg2} using the surrogate objective $\Gamma(\cdot)$ to identify a greedy set $\mathcal{I}_g$, and find the respective (optimal) attack set $\mathcal{I}_g'$. We plot the ratio of $\Lambda(\mathcal{I}_g \setminus \mathcal{I}'_g)/\Lambda(\mathcal{I^*}\setminus \mathcal{I}'^*)$ in Figure \ref{fig:a}.  We observe that the information set obtained by optimizing for the surrogate objective provides near-optimal performance for the original objective (i.e., R-MPIS objective). Finally, we run Algorithm \ref{algorithm:alg2} for the given problem instance using the surrogate objective function (i.e., M-RMPIS problem) and plot the ratio of greedy to optimal utility $\Gamma(\mathcal{I} \setminus \mathcal{I}')/\Gamma(\mathcal{I}^* \setminus \mathcal{I}'^*)$ in Figure \ref{fig:e}. We observe near-optimal performance of Algorithm \ref{algorithm:alg2} which aligns with our theoretical bounds.

\subsection{Evaluation on Randomly Generated Instances}
We evaluate the performance of Algorithm \ref{algorithm:alg} and Algorithm \ref{algorithm:alg2} on several randomly generated instances of the R-MPIS and M-RMPIS problems, respectively. For each of the problems, we generate 100 random instances, where for each instance, we set the total number of data sources to $\mathcal{D}=10$, and generate a random penalty matrix $\Xi \in \mathbb{R}^{|\Theta| \times |\Theta|}$ with $|\Theta| = 10$. We randomly generate the observationally equivalent sets $F_{\theta_p}(i)$ for each $\theta_p \in \Theta$ and $i \in \mathcal{D}$. For each instance, we pick a selection budget $K$ uniformly at random from $\{5,...,10\}$ and the attack budget $A$ uniformly at random from $\{1,...,K-1\}$.

We run Algorithm \ref{algorithm:alg} for each instance to find the greedy set $\mathcal{I}$ and find the optimal attack set $\mathcal{I}'$ using a brute-force search. We find the optimal information set and attack sets $(\mathcal{I}^*, \mathcal{I}'^{*})$ through brute-force search. We plot the ratio of greedy to optimal utility $\Lambda(\mathcal{I}\setminus \mathcal{I}')/\Lambda(\mathcal{I}^*\setminus \mathcal{I}'^{*})$ in Figure \ref{fig:f}.  We note that for the R-MPIS problem, the randomly generated instances may not exhibit the weak submodularity property, since the misclassification penalties may not satisfy Assumption 2. As a result, for such instances, the performance bounds (of Algorithm \ref{algorithm:alg}) become trivial (i.e., $0$). Despite this, we observe near-optimal performance of the Algorithm \ref{algorithm:alg}, achieving an average of $76.44 \%$  of optimal utility. Similarly, we run Algorithm \ref{algorithm:alg2} for $100$ instances of the M-RMPIS problem and plot the ratio of greedy to optimal (modified) utility $\Gamma(\mathcal{I}\setminus \mathcal{I}')/\Gamma(\mathcal{I}^*\setminus \mathcal{I}'^{*})$ in Figure \ref{fig:g}. We observe a near-optimal performance of the Algorithm \ref{algorithm:alg2}, achieving an average of $80.61 \%$ of optimal utility. 

\subsection{Comparison with Baselines}
 We compare the utility provided by the information sets selected by Algorithm \ref{algorithm:alg} (for R-MPIS) and Algorithm \ref{algorithm:alg2} (for M-RMPIS) with the utility provided by the following algorithms: (i) Oblivious Selection (\textsc{Oblivious}) (Baseline considered in \cite{bogunovic2018robust}), (ii) Vanilla-Greedy Algorithm (\textsc{Greedy}) (Algorithm 1 in \cite{bhargav2023complexity}) and (iii) Partitioned Robust Submodular Optimization Algorithm (\textsc{PRo}) (Algorithm 1 in \cite{bogunovic2017robust}). We create an instance of the R-MPIS Problem with $\Theta=10$ by randomly generating a penalty matrix. We set the number of information sources to $\mathcal{D} = 20$ and randomly generate the observationally equivalent sets $F_{\theta_p}(i)$ for each $\theta_p \in \Theta$ and $i \in \mathcal{D}$. We set the attack budget to $A = 5$ and vary the selection budget $K \in \{7,9,11,13,15,17, 19\}$. For every instance of the R-MPIS problem, we run Algorithm \ref{algorithm:alg} with $\beta = \lfloor K/A \rfloor$. We compare the utility provided by the information sources (which remain after an attacker has best responded with an attack set) selected by the respective algorithms in Figures \ref{fig:h} and \ref{fig:i}. For R-MPIS, we observe in Figure \ref{fig:h} that Algorithm \ref{algorithm:alg} outperforms all baselines. Furthermore, we also observe that the utility increases with an increase in the selection budget. For M-RMPIS, we observe in Figure \ref{fig:i} that Algorithm \ref{algorithm:alg2} and \textsc{PRo} show similar performance, since the objective is submodular. However, Algorithm \ref{algorithm:alg2} is more efficient and scalable since \textsc{PRo} constructs multiple sets (or buckets) whose number and size depend on the selection and attack budgets $(K,A)$, while Algorithm \ref{algorithm:alg2} always constructs two sets $\mathcal{A}_1$ and $\mathcal{A}_2$.

\section{Conclusion}
In this work, we studied the robust information selection problem for a hypothesis testing/ classification task under adversarial attacks or deletions of the selected information sources. We proposed a novel misclassification penalty framework which enables non-uniform treatment of classification errors, and captures the varying importance of misclassification events. We characterized the weak-submodularity and curvature properties of the objective function and proposed an efficient greedy algorithm with provable near-optimality guarantees for minimizing the maximum misclassification penalty under failures or adversarial deletions. Recognizing the limitations of the maximum penalty metric, we introduced a submodular surrogate based on the total penalty of misclassification and proposed a greedy algorithm with stronger performance guarantees. Finally, we empirically demonstrated the effectiveness of our algorithms in various instances, showing their robustness and near-optimal performance. 

\bibliographystyle{ieeetr} 
\bibliography{main}

\begin{thebibliography}{10}

\bibitem{adams1999comparing}
N.~M. Adams and D.~J. Hand, ``Comparing classifiers when the misallocation costs are uncertain,'' {\em Pattern Recognition}, vol.~32, no.~7, pp.~1139--1147, 1999.

\bibitem{pendharkar2017bayesian}
P.~C. Pendharkar, ``Bayesian posterior misclassification error risk distributions for ensemble classifiers,'' {\em Engineering Applications of Artificial Intelligence}, vol.~65, pp.~484--492, 2017.

\bibitem{hou2013modeling}
Y.~Hou, P.~Edara, and C.~Sun, ``Modeling mandatory lane changing using bayes classifier and decision trees,'' {\em IEEE Transactions on Intelligent Transportation Systems, 647-655}, 2013.

\bibitem{sensoy2021misclassification}
M.~Sensoy, M.~Saleki, S.~Julier, R.~Aydogan, and J.~Reid, ``Misclassification risk and uncertainty quantification in deep classifiers,'' in {\em Proceedings of the IEEE/CVF Winter Conference on Applications of Computer Vision}, pp.~2484--2492, 2021.

\bibitem{elkan2001foundations}
C.~Elkan, ``The foundations of cost-sensitive learning,'' in {\em International joint conference on artificial intelligence}, vol.~17, pp.~973--978, Lawrence Erlbaum Associates Ltd, 2001.

\bibitem{hollinger2013sampling}
G.~A. Hollinger and G.~S. Sukhatme, ``Sampling-based motion planning for robotic information gathering.,'' in {\em Robotics: Science and Systems}, vol.~3, pp.~1--8, 2013.

\bibitem{chen2015sequential}
Y.~Chen, S.~H. Hassani, A.~Karbasi, and A.~Krause, ``Sequential information maximization: When is greedy near-optimal?,'' in {\em Conference on Learning Theory, 338-363}, 2015.

\bibitem{golovin2010near}
D.~Golovin, A.~Krause, and D.~Ray, ``Near-optimal {Bayesian} active learning with noisy observations,'' in {\em Proc. Advances in Neural Information Processing Systems}, pp.~766--774, 2010.

\bibitem{ghasemi2019online}
M.~Ghasemi and U.~Topcu, ``Online active perception for partially observable markov decision processes with limited budget,'' in {\em 2019 IEEE 58th Conference on Decision and Control (CDC)}, pp.~6169--6174, IEEE, 2019.

\bibitem{gupta2006stochastic}
V.~Gupta, T.~H. Chung, B.~Hassibi, and R.~M. Murray, ``On a stochastic sensor selection algorithm with applications in sensor scheduling and sensor coverage,'' {\em Automatica}, vol.~42, no.~2, pp.~251--260, 2006.

\bibitem{krause2010submodular}
A.~Krause and V.~Cevher, ``Submodular dictionary selection for sparse representation,'' in {\em Proc. International Conference on Machine Learning}, pp.~567--574, 2010.

\bibitem{chepuri2014sparsity}
S.~P. Chepuri and G.~Leus, ``Sparsity-promoting sensor selection for non-linear measurement models,'' {\em IEEE Transactions on Signal Processing}, vol.~63, no.~3, pp.~684--698, 2014.

\bibitem{mo2011sensor}
Y.~Mo, R.~Ambrosino, and B.~Sinopoli, ``Sensor selection strategies for state estimation in energy constrained wireless sensor networks,'' {\em Automatica}, vol.~47, no.~7, pp.~1330--1338, 2011.

\bibitem{hashemi2020randomized}
A.~Hashemi, M.~Ghasemi, H.~Vikalo, and U.~Topcu, ``Randomized greedy sensor selection: Leveraging weak submodularity,'' {\em IEEE Transactions on Automatic Control}, vol.~66, no.~1, pp.~199--212, 2020.

\bibitem{shamaiah2010greedy}
M.~Shamaiah, S.~Banerjee, and H.~Vikalo, ``Greedy sensor selection: Leveraging submodularity,'' in {\em 49th IEEE conference on decision and control (CDC)}, pp.~2572--2577, IEEE, 2010.

\bibitem{krause2007near}
A.~Krause and C.~Guestrin, ``Near-optimal observation selection using submodular functions,'' in {\em AAAI}, vol.~7, pp.~1650--1654, 2007.

\bibitem{ye2018complexity}
L.~{Ye}, N.~{Woodford}, S.~{Roy}, and S.~{Sundaram}, ``On the complexity and approximability of optimal sensor selection and attack for {Kalman} filtering,'' {\em IEEE Transactions on Automatic Control}, 2020.

\bibitem{bhargav2023complexity}
J.~Bhargav, M.~Ghasemi, and S.~Sundaram, ``{On the Complexity and Approximability of Optimal Sensor Selection for Mixed-Observable Markov Decision Processes},'' in {\em 2023 American Control Conference (ACC)}, pp.~3332--3337, IEEE, 2023.

\bibitem{ye2020complexity}
L.~Ye, N.~Woodford, S.~Roy, and S.~Sundaram, ``On the complexity and approximability of optimal sensor selection and attack for kalman filtering,'' {\em IEEE Transactions on Automatic Control}, vol.~66, no.~5, pp.~2146--2161, 2020.

\bibitem{laszka2015resilient}
A.~Laszka, Y.~Vorobeychik, and X.~Koutsoukos, ``Resilient observation selection in adversarial settings,'' in {\em Proc. IEEE Confernece on Decision and Control}, pp.~7416--7421, 2015.

\bibitem{oh2023dynamic}
M.~S. Oh, S.~Hosseinalipour, T.~Kim, D.~J. Love, J.~V. Krogmeier, and C.~G. Brinton, ``Dynamic and robust sensor selection strategies for wireless positioning with toa/rss measurement,'' {\em IEEE Transactions on Vehicular Technology}, vol.~72, no.~11, pp.~14656--14672, 2023.

\bibitem{krause2008robust}
A.~Krause, H.~B. McMahan, C.~Guestrin, and A.~Gupta, ``Robust submodular observation selection.,'' {\em Journal of Machine Learning Research}, vol.~9, no.~12, 2008.

\bibitem{kaya2025randomized}
E.~C. Kaya, M.~Hibbard, T.~Tanaka, U.~Topcu, and A.~Hashemi, ``Randomized greedy methods for weak submodular sensor selection with robustness considerations,'' {\em Automatica}, vol.~171, p.~111984, 2025.

\bibitem{tzoumas2018resilient}
V.~Tzoumas, A.~Jadbabaie, and G.~J. Pappas, ``Resilient non-submodular maximization over matroid constraints,'' {\em arXiv preprint arXiv:1804.01013}, 2018.

\bibitem{bhargav24a}
J.~Bhargav, M.~Ghasemi, and S.~Sundaram, ``{Submodular Information Selection for Hypothesis Testing with Misclassification Penalties},'' in {\em Proceedings of the 6th Annual Learning for Dynamics \& Control Conference}, vol.~242, pp.~566--577, PMLR, 2024.

\bibitem{jadbabaie2012non}
A.~Jadbabaie, P.~Molavi, A.~Sandroni, and A.~Tahbaz-Salehi, ``Non-{Bayesian} social learning,'' {\em Games and Economic Behavior}, vol.~76, no.~1, pp.~210--225, 2012.

\bibitem{liu2014social}
Q.~Liu, A.~Fang, L.~Wang, and X.~Wang, ``Social learning with time-varying weights,'' {\em Journal of Systems Science and Complexity}, vol.~27, no.~3, pp.~581--593, 2014.

\bibitem{lalitha2014social}
A.~Lalitha, A.~Sarwate, and T.~Javidi, ``Social learning and distributed hypothesis testing,'' in {\em Proc. IEEE International Symposium on Information Theory}, pp.~551--555, 2014.

\bibitem{ye2021near}
L.~Ye, A.~Mitra, and S.~Sundaram, ``Near-optimal data source selection for {Bayesian} learning,'' in {\em Learning for Dynamics and Control}, pp.~854--865, PMLR, 2021.

\bibitem{nedic2017fast}
A.~Nedi{\'c}, A.~Olshevsky, and C.~A. Uribe, ``Fast convergence rates for distributed non-{Bayesian} learning,'' {\em IEEE Transactions on Automatic Control}, vol.~62, no.~11, pp.~5538--5553, 2017.

\bibitem{mitra2020new}
A.~{Mitra}, J.~A. {Richards}, and S.~{Sundaram}, ``A new approach to distributed hypothesis testing and non-{Bayesian} learning: Improved learning rate and {Byzantine}-resilience,'' {\em IEEE Transactions on Automatic Control}, 2020.

\bibitem{hwang2012multiple}
C.-L. Hwang and A.~S.~M. Masud, {\em {Multiple objective decision making — Methods and Applications: A state-of-the-art survey}}, vol.~164.
\newblock Springer Science \& Business Media, 2012.

\bibitem{vondrak1978submodularity}
J.~Vondrak, ``Submodularity and curvature: The optimal algorithm,'' {\em Annals of Discrete Math}, vol.~2, pp.~65--74, 1978.

\bibitem{das2018approximate}
A.~Das and D.~Kempe, ``Approximate submodularity and its applications: Subset selection, sparse approximation and dictionary selection,'' {\em Journal of Machine Learning Research}, vol.~19, no.~3, pp.~1--34, 2018.

\bibitem{bogunovic2018robust}
I.~Bogunovic, J.~Zhao, and V.~Cevher, ``Robust maximization of non-submodular objectives,'' in {\em International Conference on Artificial Intelligence and Statistics}, pp.~890--899, PMLR, 2018.

\bibitem{bogunovic2017robust}
I.~Bogunovic, S.~Mitrovi{\'c}, J.~Scarlett, and V.~Cevher, ``Robust submodular maximization: A non-uniform partitioning approach,'' in {\em International Conference on Machine Learning}, pp.~508--516, PMLR, 2017.

\end{thebibliography}

% \bibliographystyle{unsrt}
% %\setcitestyle{authoryear,open={square} and close={square}}
% \bibliography{main}
% \end{document}
\end{document}